\definecolor{BW}{RGB}{153,0,0}
\definecolor{DW}{RGB}{0,76,153}
\newtheorem{lemma}{Lemma}
\newtheorem{theorem}{Theorem}
\newtheorem{definition}{Definition}
\newtheorem{proposition}{Proposition}
\long\def\comment#1{}
\newcommand{\etal}{\textit{et al}. }
\def\ie{$i.e.$}
\def\eg{$e.g.$}
\crefname{section}{Sec.}{Secs.}
\Crefname{section}{Section}{Sections}
\Crefname{table}{Table}{Tables}
\crefname{table}{Tab.}{Tabs.}
\newcommand{\name}{{\emph{domain watermark}}}
\title{Domain Watermark: Effective and Harmless Dataset Copyright Protection is Closed at Hand}
\author{Junfeng Guo$^{1,}$\thanks{The first two authors contributed equally to this work. Correspondence to Yiming Li.} , Yiming Li$^{2, 3,\ast}$, Lixu Wang$^{4}$, Shu-Tao Xia$^{5}$, Heng Huang$^{1}$, Cong Liu$^{6}$, Bo Li$^{7,8}$\\
$^{1}$Department of Computer Science, University of Maryland\\
$^{2}$ZJU-Hangzhou Global Scientific and Technological Innovation Center\\
$^{3}$School of Cyber Science and Technology, Zhejiang University\\
$^{4}$Department of Computer Science, Northwestern University\\
$^{5}$Tsinghua Shenzhen International Graduate School, Tsinghua University\\
$^{6}$Department of Electronic and Computer Engineering, UC Riverside\\
$^{7}$Department of Computer Science, University of Illinois Urbana-Champaign\\
$^{8}$Department of Computer Science, University of Chicago\\
\texttt{\{gjf2023,heng\}@umd.edu}; \texttt{li-ym@zju.edu.cn};\\
\texttt{lixuwang2025@u.northwestern.edu};\\
\texttt{xiast@sz.tsinghua.edu.cn}; \texttt{congl@ucr.edu};
\texttt{lbo@illinois.edu}
}
\begin{document}
\maketitle
\begin{abstract}
The prosperity of deep neural networks (DNNs) is largely benefited from open-source datasets, based on which users can evaluate and improve their methods. In this paper, we revisit backdoor-based dataset ownership verification (DOV), which is currently the only feasible approach to protect the copyright of open-source datasets. We reveal that these methods are fundamentally harmful given that they could introduce malicious misclassification behaviors to watermarked DNNs by the adversaries. In this paper, we design DOV from another perspective by making watermarked models (trained on the protected dataset) correctly classify some `hard' samples that will be misclassified by the benign model. Our method is inspired by the generalization property of DNNs, where we find a \emph{hardly-generalized domain} for the original dataset (as its \emph{domain watermark}). It can be easily learned with the protected dataset containing modified samples. Specifically, we formulate the domain generation as a bi-level optimization and propose to optimize a set of visually-indistinguishable clean-label modified data with similar effects to domain-watermarked samples from the hardly-generalized domain to ensure watermark stealthiness. We also design a hypothesis-test-guided ownership verification via our domain watermark and provide the theoretical analyses of our method. Extensive experiments on three benchmark datasets are conducted, which verify the effectiveness of our method and its resistance to potential adaptive methods. The code for reproducing main experiments is available at \url{https://github.com/JunfengGo/Domain-Watermark}.
\end{abstract}

\vspace{-1em}
\section{Introduction}
\vspace{-0.8em}

Deep neural networks (DNNs) have been applied to a wide range of domains and have shown human-competitive performance. The great success of DNNs heavily relies on the availability of various open-source datasets (\eg, CIFAR ~\cite{krizhevsky2009learning} and ImageNet~\cite{deng2009imagenet}). With these high-quality datasets, researchers can evaluate and improve the proposed methods upon them. Currently, most of these datasets limit their usage to education or research purpose and are prohibited from commercial applications without authorization. How to protect their copyrights is of great significance~\cite{li2023black,li2022untargeted,tang2023did,wang2021non}.

Currently, there are many classical methods for data protection, such as encryption \cite{boneh2001identity,martins2017survey,deng2020identity}, differential privacy \cite{zhu2019deep,shokri2017membership,bai2022multinomial}, and digital watermarking \cite{haddad2020joint,wang2021faketagger,guan2022deepmih,cheng2021mid}. However, they are not able to protect the copyrights of open-source datasets since they either hinder the dataset accessibility ($e.g.$, encryption) or require the information of the training process ($e.g.$, differential privacy and digital watermarking) of suspicious models that could be trained on it. 

To the best of our knowledge, backdoor-based dataset ownership verification (DOV) \cite{li2023black,li2022untargeted,tang2023did} is currently the only feasible approach to protect them, where defenders exploit backdoor attacks \cite{gu2019badnets,li2022backdoor,hayase2023few} to watermark the original dataset such that they can verify whether a suspicious model is trained on the protected dataset by examining whether it has specific backdoor behaviors. 
Recently, Li \etal \cite{li2022untargeted} first discussed the `harmlessness' requirement of backdoor-based DOV that the dataset watermark should not introduce new security risks to models trained on the protected dataset and proposed untargeted backdoor watermarks towards harmless verification by making the predictions of watermarked samples dispersible instead of deterministic (as a pre-defined target label).

In this paper, we revisit dataset ownership verification. We argue that backdoor-based dataset watermarks can never achieve truly harmless verification since their fundamental mechanism is making watermarked model misclassifies `easy' samples ($i.e.$, backdoor-poisoned samples) that can be correctly predicted by the benign model (as shown in Figure \ref{fig:intro}). It is with these particular misclassification behaviors that the dataset owners can conduct ownership verification. An intriguing question arises: \emph{Is harmless dataset ownership certification possible to achieve}?

\begin{figure}[!t]
    \centering
    \includegraphics[width=\textwidth]{./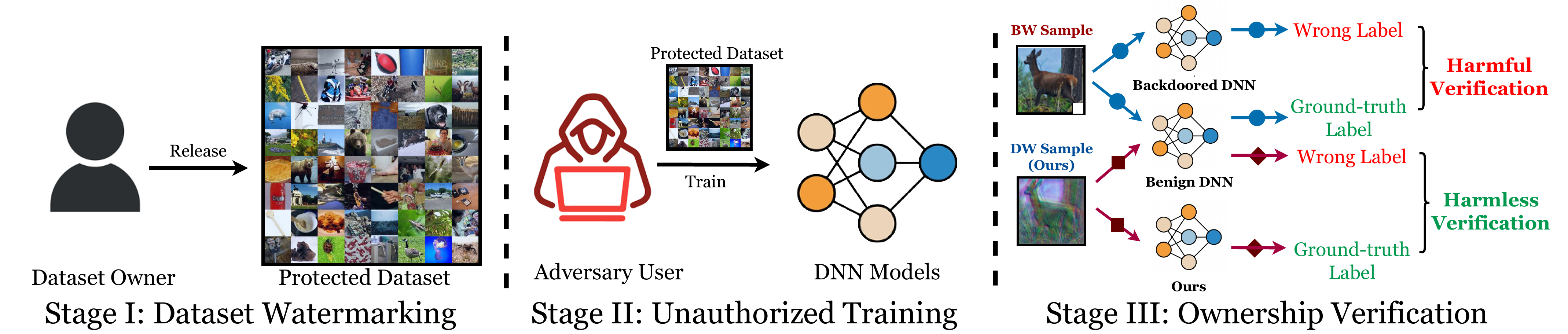}
    \vspace{-1em}
    \caption{The main pipeline of dataset ownership verification with backdoor-based dataset watermarks and our domain watermark, where \textcolor{BW}{BW Sample} represents existing backdoor-watermarked sample while \textcolor{DW}{DW Sample} represents our proposed domain-watermarked sample. Existing backdoor-based methods make the watermarked model ($i.e.$, the backdoored DNN) misclassify `easy' samples that can be correctly predicted by the benign model and therefore the verification is harmful. In contrast, our ownership verification is harmless since we make the watermarked model correctly predict `hard' samples that are misclassified by the benign model.  }
    \label{fig:intro}
\end{figure}

The answer to the aforementioned problem is positive. In this paper, we design dataset ownership verification from another angle, by making the watermarked model can correctly classify some `hard' samples that will be misclassified by the benign model. Accordingly, we can exploit this difference to design ownership verification while not introducing any malicious prediction behaviors to watermarked models that will be deployed by dataset users (as shown in Figure \ref{fig:intro}). In general, our method is inspired by the generalization property of DNNs, where we intend to find a \emph{hardly-generalized domain} for the original dataset. It can be easily learned with the protected dataset containing modified samples. Specifically, we formulate the domain generation as a bi-level optimization and leverage a transformation module to generate domain-watermarked samples; We propose to optimize a set of visually-indistinguishable modified data having similar effects to domain-watermarked samples as our \emph{domain watermark} to ensure the stealthiness of dataset watermarking; We design a hypothesis-test-guided method to conduct ownership verification via our domain watermark at the end. We also provide theoretical analyses of all stages in our method.

In conclusion, the main contributions of this paper are four-folds: \textbf{1)} We revisit dataset ownership verification (DOV) and reveal the harmful drawback of methods based on backdoor attacks. \textbf{2)} We explore the DOV problem from another perspective, based on which we design a truly harmless DOV method via domain watermark. To the best of our knowledge, this is the first non-backdoor-based DOV method. Our work makes dataset ownership verification become an independent research field instead of the sub-field of backdoor attacks. \textbf{3)} We discuss how to design the domain watermark and provide its theoretical foundations. \textbf{4)} We conduct experiments on benchmark datasets, verifying the effectiveness of our method and its resistance to potential adaptive methods.

\section{Related Work}
\subsection{Backdoor Attacks}
Backdoor attack\footnote{In this paper, we focus on poison-only backdoor attacks, where the adversaries can only modify a few training samples to implant backdoors. Only these attacks can be used as the dataset watermark for ownership verification. Attacks with more requirements ($e.g.$, control model training) \cite{nguyen2020input,li2022few,dong2023one} are out of our scope.} \cite{gu2019badnets,qi2023revisiting,guo2023masterkey} is a training-phrase threat of DNNs, where the adversary intends to implant a \emph{backdoor} ($i.e.$, the latent connection between the adversary-specified trigger pattern and the target label) into the victim model by maliciously manipulating a few training samples. The backdoored DNNs behave normally while their predictions will be maliciously changed to the target label whenever the testing samples contain the trigger pattern. In general, existing backdoor attacks can be divided into two main categories based on the property of the target label, as follows:


\noindent \textbf{Poisoned-Label Backdoor Attacks.} In these attacks, the target label of poisoned samples is different from their ground-truth labels. This is the most classical attack paradigm and is more easily to implant hidden backdoors. For example, BadNets \cite{gu2019badnets} is the first backdoor attack, where the adversaries randomly modify a few samples from the original dataset by attaching a pre-defined trigger patch to their images and changing their labels to the target label. These modified samples (dubbed \emph{poisoned samples}) associated with remaining benign samples are packed as the \emph{poisoned dataset} that is released to victim users for training; After that, Chen \etal \cite{chen2017targeted} improved the stealthiness of BadNets by introducing trigger transparency; Nguyen \etal \cite{nguyen2021wanet} proposed a more stealthy backdoor attack whose trigger patterns were designed via image-warping; Recently, Li \etal \cite{li2022untargeted} proposed the first untargeted (poisoned-label) backdoor attack ($i.e.$, UBW-P) for dataset ownership verification.


\noindent \textbf{Clean-Label Backdoor Attacks.} In these attacks, the target label of poisoned samples is consistent with their ground-truth labels. Accordingly, these attacks are more stealthy, compared to the poisoned-label ones. However, they usually suffer from low effectiveness, especially on datasets with a high image resolution or many classes, due to the \emph{antagonistic effects} of `robust features’ related to the target class contained in poisoned samples \cite{gao2023not}. Label-consistent attack is the first clean-label attack where the adversaries introduced untargeted adversarial perturbations before adding trigger patterns; After that, a more effective attack (\ie, Sleeper Agent~\cite{sleeperagent}) is proposed, which crafts clean-label poisoned samples via bi-level optimization; Recently, Li \etal \cite{li2022untargeted} proposed UBW-C, which generated poisoned samples for leading untargeted misclassifications to attacked DNNs.



\subsection{Data Protection}

\noindent \textbf{Classical Data Protection.} Data protection is a classical and important research direction, aiming to prevent unauthorized data usage or protect data privacy. Currently, existing classical data protection can be roughly divided into three main categories, including \textbf{(1)} encryption, \textbf{(2)} digital watermarking, and \textbf{(3)} privacy protection. Specifically, encryption \cite{rivest1992md5,boneh2001identity,martins2017survey} encrypts the whole or parts of the protected data so that only authorized users who hold a secret key for decryption can use it; Digital watermarking \cite{hsu1999hidden,hsieh2001hiding,guo2018halftone} embeds an owner-specified pattern to the protected data to claim the ownership; Privacy protection focuses on preventing the leakage of sensitive information of the data in both empirical \cite{xiong2020adgan,li2021visual,xu2021audio} and certified manners \cite{zhu2019deep,zhu2021fine,bai2022multinomial}. However, these traditional approaches are not feasible to protect the copyright of open-source datasets since they either hinder the dataset accessibility or require the information of the training process that will not be disclosed.

\noindent \textbf{Dataset Ownership Verification.} Dataset ownership verification (DOV) is an emerging topic in data protection, aiming to verify whether a given suspicious model is trained on the protected dataset. To the best of our knowledge, this is currently the only feasible method to protect the copyright of open-source datasets. Specifically, it intends to implant specific prediction (towards verification samples) behaviors in models trained on the protected dataset while not reducing their performance on benign samples. Dataset owners can conduct ownership verification by examining whether the suspicious model has these behaviors. Currently, all DOV methods \cite{li2023black,li2022untargeted,tang2023did} exploit backdoor attacks to watermark the unprotected benign dataset. For example, \cite{li2023black} adopted poisoned-label backdoor attacks while \cite{tang2023did} adopted clean-label ones for dataset watermarking. Recently, Li \etal \cite{li2022untargeted} first discussed the `harmlessness' requirement of DOV that the dataset watermark should not introduce new security risks to models trained on the protected dataset and proposed the untargeted backdoor watermarks. However, there is still no definition of harmlessness and backdoor-based DOV methods can never achieve truly harmless verification for they introduce backdoor threats. How to design a harmless DOV method is still an important open question.

\section{Domain Watermark}


\subsection{Preliminaries}

\noindent \textbf{Threat Model.} Following existing works in dataset ownership verification \cite{li2023black,li2022untargeted,tang2023did}, we assume that the defenders (\ie, dataset owners) can only watermark the \emph{benign dataset} to generate the \emph{protected dataset}. They will release the protected dataset instead of the original benign dataset for copyright protection. Given a third-party suspicious model that may be trained on the protected dataset without authorization, we consider the \emph{black-box setting} where defenders have no information about other training configurations (\eg, loss function and model architecture) of the model and can only access it to obtain predicted probability vectors via its model API.


\noindent \textbf{The Main Pipeline of Dataset Watermark.}
Let $\mathcal{D}=\{(\bm{x}_{i},y_{i})\}_{i=1}^{N}$ denotes the benign training dataset. Let we consider an image classification task with $K$-classes, \ie,  $\bm{x}_{i} \in \mathcal{X}=[0,1]^{C\times W \times H}$ represents the image with $y_{i}\in\mathcal{Y}=\{1,\cdots,K\}$ as its label. Instead of releasing $\mathcal{D}$ directly, the dataset owner will generate and release its watermarked version (\ie, $\mathcal{D}_{w}$). Specifically, $\mathcal{D}_{w} = \mathcal{D}_{m} \cup \mathcal{D}_{b}$, where $\mathcal{D}_{m}$ consists of the modified version of samples from a small selected subset $\mathcal{D}_{s}$ of $\mathcal{D}$ (\ie, $\mathcal{D}_{s} \subset \mathcal{D}$) and $\mathcal{D}_{b}$ contains remaining benign samples (\ie, $\mathcal{D}_{b} = \mathcal{D} - \mathcal{D}_{s}$). The $\mathcal{D}_{m}$ is generated by the defender-specified image generator $G_x: \mathcal{X} \rightarrow \mathcal{X}$ and the label generator $G_y: \mathcal{Y} \rightarrow \mathcal{Y}$, \ie, $\mathcal{D}_{m} = \left \{(G_x(\bm{x}), G_y(y))| (\bm{x}, y) \in \mathcal{D}_{s} \right \}$. For example, $G_x = (\bm{1}-\bm{m})\odot \bm{t} + \bm{m} \odot \bm{x}$ and $G_y = y_t$ in BadNets \cite{gu2019badnets}, where $\bm{m} \in \{0, 1\}^{C\times W \times H}$ is the trigger mask, $\bm{t} \in [0, 1]^{C\times W \times H}$ is the trigger pattern, $\odot$ denotes the element-wise product, and $y_t$ is the target label. In particular, $\gamma \triangleq \frac{|\mathcal{D}_{m}|}{|\mathcal{D}_w|}$ is called the \emph{watermarking rate}. All models trained on the protected dataset $\mathcal{D}_w$ will have special prediction behaviors on $G_x(\bm{x})$ for ownership verification. Specifically, let $C: \mathcal{X} \rightarrow \mathcal{Y}$ denotes a third-party suspicious model that could be trained on the protected dataset, existing backdoor-based methods will examine whether it conduct unauthorized training by testing whether $C(G_x(\bm{x})) = y_t$. Since $y_t \neq y$ in most cases, these backdoor-based watermarks are harmful.



\subsection{Problem Formulation}\label{sec:pf}
As described in previous sections, existing backdoor-inspired dataset ownership verification (DOV) methods \cite{li2023black,li2022untargeted,tang2023did} would cause malicious misclassification on watermarked samples to all models trained on the protected dataset, therefore they are harmful. \emph{This limitation of backdoor-based DOV methods cannot be eliminated} because their inherent mechanism is to lead the watermarked model to have particular mispredicted behaviors for verification, although the misclassification could be random and less harmful \cite{li2022backdoor}. In this paper, we intend to \emph{design a truly harmless DOV method so that the watermarked models will correctly classify watermarked samples}. Before we formally define the studied problem, we first provide the definition of harmful degree of a DOV method.


\begin{definition}[Harmful and Relatively Harmful Degree]\label{def:D_harmfulness}
Let $\mathcal{\hat{D}} = \{ (\bm{\hat{x}_i}, y_i) \}_{i=1}^{N}$ indicates a set of watermarked samples used for ownership verification of a DOV method, where $\bm{\hat{x}}_i$ is the verification sample with $y_i \in \mathcal{Y}$ as its ground-truth label (instead of its given label). Let $\hat{C}$ and $C$ represent a classifier trained on the protected and unprotected datasets, respectively. The harmful degree is
$
    H \triangleq \frac{1}{N}\sum_{i=1}^{N}\mathbb{I}\{\hat{C}(\bm{\hat{x}}_{i})\neq y_{i}\}
$
and the relatively harmful degree is 
$
  \hat{H} \triangleq \frac{1}{N} \left(\sum_{i=1}^{N}\mathbb{I}\{\hat{C}(\bm{\hat{x}}_{i})\neq y_{i}\}-\sum_{i=1}^{N}\mathbb{I}\{C(\bm{\hat{x}_{i}})\neq y_{i}\}\right)
$
where $\mathbb{I}\{\cdot\}$ is the indicator function.
\end{definition}


To design a harmless DOV method, we intend to make watermarked DNNs correctly classify some `hard' samples that will be misclassified by the model trained on the unprotected benign dataset. Inspired by the generalization property of DNNs, we intend to find a \emph{hardly-generalized domain} of the benign dataset, which can be easily learned with the protected dataset containing the modified samples. In this paper, we call this watermarking method as \emph{domain watermark}, defined as follows.


\begin{definition}[Domain Watermark]\label{def:DW}
Given a benign dataset $\mathcal{D}=\{(\bm{x}_{i},y_{i})\}_{i=1}^{N}$, let $C: \mathcal{X} \rightarrow \mathcal{Y}$ denotes a model trained on $\mathcal{D}$. Assume that $G_d$ denotes a domain generator such that $G_d(\bm{x}_{i})$ owns the same ground-truth label as $\bm{x}_{i}$ but belongs to a hardly-generalized domain, $i.e.$, $\sum_{(\bm{x_{i}}, y_{i}) \in \mathcal{D}} \mathbb{I}\{C(\bm{x}_{i}) = y_{i}\} \gg \sum_{(\bm{x_{i}}, y_{i}) \in \mathcal{D}} \mathbb{I}\{C(G_d(\bm{x}_{i})) = y_{i}\}$. We intend to find a watermarked version of $\mathcal{D}$ ($i.e.$, $\mathcal{D}_d$) with watermarking rate $\gamma$, such that the watermarked model $\hat{C}$ trained on it have two properties: \emph{\textbf{(1)}} $\frac{1}{N} \sum_{(\bm{x_{i}}, y_{i}) \in \mathcal{D}} \mathbb{I}\{\hat{C}(\bm{x}_{i}) = y_{i}\} \geq \beta$ and \emph{\textbf{(2)}} $\frac{1}{N} \sum_{(\bm{x_{i}}, y_{i}) \in \mathcal{D}} (\mathbb{I}\{\hat{C}(\bm{x}_{i}) = y_{i}\} - \mathbb{I}\{\hat{C}(G_d(\bm{x}_{i})) = y_{i}\}) \leq \tau$, where $\beta, \tau \in [0, 1]$ are given parameters. In this paper, $\mathcal{D}_d$ is defined as the domain watermark of the benign dataset $\mathcal{D}$.
\end{definition}

\subsection{Generating the Hardly-Generalized Domain}\label{sec:HGD}

As illustrated in Definition \ref{def:DW}, finding a hardly-generalized target domain $\mathcal{T}$ (with domain generator $G_d$) of the source domain $\mathcal{S}$ is the first step of our domain watermark. To guide the construction of the domain $\mathcal{T}$, we have the following Lemma \ref{theorem:bound}.

\vspace{0.3em}
\begin{lemma}[Generalization Bound~\cite{domain_MI}]
\label{theorem:bound}The bound of expected risk on a given target domain $\mathcal{T}$ is negatively associated with mutual information between features for source $\mathcal{S}$ and target $\mathcal{T}$ domains:
\begin{equation}
\label{eq:bound}
   \mathcal{R}_{\mathcal{T}}(f) \leq \mathcal{R}_{\mathcal{S}}(f)-4I(\bm{z};\bm{\hat{z}})+4H(Y)+\frac{1}{2}d_{\mathcal{H}\triangle\mathcal{H}}(p(\bm{z}),p(\bm{\hat{z}})), 
\end{equation}
where $\mathcal{R}_{\mathcal{T}}(f)=\mathbb{E}_{(\bm{\hat{x}},y)\sim\mathcal{T}}~[\mathbb{I}\{C(\bm{\hat{x}}) \neq y\}]$, $\mathcal{R}_{\mathcal{S}}(f)=\mathbb{E}_{(\bm{x},y)\sim\mathcal{S}}~[\mathbb{I}\{C(\bm{x}) \neq y\}]$. $I(\bm{z};\bm{\hat{z}})$ is mutual information between features from $\mathcal{S}$ and $\mathcal{T}$.   $d_{\mathcal{H}\triangle\mathcal{H}}(p(\bm{z}),p(\bm{\hat{z}}))$ is $\mathcal{H}\triangle\mathcal{H}$-divergence for measuring the divergence of feature marginal distributions of two domains, and $H(\cdot)$ is the entropy. 
\end{lemma}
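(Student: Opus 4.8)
The plan is to derive the inequality in two stages, isolating the $\mathcal{H}\triangle\mathcal{H}$-divergence term from the information-theoretic term. First I would invoke the classical domain-adaptation inequality of Ben-David et al., applied on the feature space rather than on the raw inputs: for any hypothesis $f$ acting on the learned features,
\[
\mathcal{R}_{\mathcal{T}}(f) \leq \mathcal{R}_{\mathcal{S}}(f) + \tfrac{1}{2}d_{\mathcal{H}\triangle\mathcal{H}}(p(\bm{z}),p(\bm{\hat{z}})) + \lambda,
\]
where $\lambda = \min_{f'}\left[\mathcal{R}_{\mathcal{S}}(f') + \mathcal{R}_{\mathcal{T}}(f')\right]$ is the error of the ideal joint hypothesis. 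This already supplies the source-risk term and the divergence term with the correct constant $\tfrac{1}{2}$, so the genuinely new work is to control $\lambda$ by the information-theoretic quantities $-4I(\bm{z};\bm{\hat{z}})+4H(Y)$.

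For the second stage I would upper-bound $\lambda$ by plugging in a single classifier built on the shared representation and bounding its risk on each domain by the corresponding Bayes error. I then convert each Bayes error into a conditional entropy via a Hellman--Raviv / Fano-type inequality of the form $P_e \le c\, H(Y\mid\cdot)$, and rewrite each conditional entropy as $H(Y\mid\bm{z}) = H(Y) - I(Y;\bm{z})$ and $H(Y\mid\bm{\hat{z}}) = H(Y) - I(Y;\bm{\hat{z}})$; this is where the $+H(Y)$ contributions come from. The crucial move is to replace the label--feature information $I(Y;\bm{\hat{z}})$ by the cross-domain feature information $I(\bm{z};\bm{\hat{z}})$: assuming the Markov structure $Y \to \bm{z} \to \bm{\hat{z}}$ (labels influence the target features only through the shared representation), the data-processing inequality and the chain rule for mutual information let me relate $I(Y;\bm{\hat{z}})$ to $I(\bm{z};\bm{\hat{z}})$, so that stronger cross-domain alignment forces the joint error down. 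Tracking the constants through these two conversions, applied to both domains, is what produces the factor $4$.

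I expect the main obstacle to be exactly this constant-tracking together with the direction of the entropy-to-error conversions in the second stage. The Hellman--Raviv bound and Fano's inequality run in opposite directions (one upper-bounds error by entropy, the other lower-bounds it), so I must select the correct one at each occurrence and verify that the chain $Y \to \bm{z} \to \bm{\hat{z}}$ is legitimate for the representation in use; any slack introduced here is absorbed into the factor $4$ and into the nonnegativity of $H(Y)$ and $I(\bm{z};\bm{\hat{z}})$. Since the statement is quoted from \cite{domain_MI}, I would finally cross-check the assembled inequality against the source to confirm both the constants and the precise definition of the feature variables $\bm{z}$ and $\bm{\hat{z}}$, rather than re-deriving every information-theoretic identity from scratch.
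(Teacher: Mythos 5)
First, a point of comparison: the paper does not prove this lemma at all --- it is quoted from \cite{domain_MI} and used as a black box to motivate minimizing $I(\bm{z};\bm{\hat{z}})$, so there is no in-paper derivation to check your sketch against. Judged on its own terms, your first stage is unobjectionable: the Ben-David et al.\ decomposition applied on the feature space does produce the source risk, the $\tfrac{1}{2}d_{\mathcal{H}\triangle\mathcal{H}}$ term, and an ideal-joint-error term $\lambda$ left to control.

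The genuine gap sits exactly at the step you call ``the crucial move.'' After writing $H(Y\mid\bm{\hat{z}}) = H(Y) - I(Y;\bm{\hat{z}})$, you need an \emph{upper} bound on this quantity, i.e.\ a \emph{lower} bound on $I(Y;\bm{\hat{z}})$ in terms of $I(\bm{z};\bm{\hat{z}})$. But for the Markov chain $Y\to\bm{z}\to\bm{\hat{z}}$ the data-processing inequality gives $I(Y;\bm{\hat{z}})\le I(\bm{z};\bm{\hat{z}})$, which runs in the wrong direction: replacing $I(Y;\bm{\hat{z}})$ by $I(\bm{z};\bm{\hat{z}})$ under a minus sign only \emph{shrinks} the candidate upper bound, so the target inequality is not implied by what you have established. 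A large $I(\bm{z};\bm{\hat{z}})$ simply does not force a large $I(Y;\bm{\hat{z}})$ without further structure; the kind of reverse bound one actually needs, e.g.\ $I(Y;\bm{\hat{z}}) \ge I(Y;\bm{z}) - H(\bm{z}\mid\bm{\hat{z}}) = I(Y;\bm{z}) - H(\bm{z}) + I(\bm{z};\bm{\hat{z}})$, drags in an $H(\bm{z})$ term absent from the stated lemma. The slack here has the wrong sign, so it cannot be ``absorbed into the factor $4$''; relatedly, the usable error-to-entropy direction (Hellman--Raviv, $P_e\le\tfrac{1}{2}H(Y\mid\cdot)$) carries a constant $\tfrac{1}{2}$, leaving the provenance of the factor $4$ unexplained. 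Your closing plan to consult \cite{domain_MI} for the precise lemma and constants is the right instinct, but as written the sketch does not close.
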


Lemma~\ref{theorem:bound} reveals the upper bound of generalization performance on $\mathcal{T}$. Since $d_{\mathcal{H}\triangle\mathcal{H}}(\cdot)$ is intractable and hard to directly optimize, as well as \cite{domain_MI} shows that only a $I(\cdot)$ is enough for generalization across domains, \emph{we propose to increase the expected risk on $\mathcal{T}$ by minimizing $I(\bm{z};\bm{\hat{z}})$}.


Specifically, we formulate the design of the target domain $\mathcal{T}$ (with the domain generator $G_d(\cdot;\bm{\theta})$) as a bi-level optimization, as follows:
\vspace{-3mm}

\begin{align}
\vspace{-3mm}
    \label{eq:domain_generate}
    &\min_{\bm{\theta}} \mathbb{E}_{p(\bm{z},\bm{\hat{z}})} ~\left [I  ( \bm{z}(\bm{w^{*}});\bm{\hat{z}}(\bm{\theta},\bm{w^{*}}))  + \lambda_{1} \mathcal{L}_{c}( \bm{z}(\bm{w^{*}}),\bm{\hat{z}}(\bm{\theta},\bm{w^{*}}))\right],\\
    &s.t.~\bm{w^{*}}=\arg\min_{\bm{w}} \left[ \mathbb{E}_{(\bm{x},y)\sim\mathcal{D}}~[\mathcal{L}(f(G_d(\bm{x};\bm{\theta});\bm{w}),y) + \mathcal{L}(f(\bm{x};\bm{w}),y)] -\lambda_{2} \mathbb{E}_{p(\bm{z},\bm{\hat{z}})}[I ( \bm{z}(\bm{w});\bm{\hat{z}(w)})\right],\notag
\end{align}
\vspace{-3mm}

where $\lambda_1$, $\lambda_2$ are two positive hyper-parameters, and $\mathcal{L}(\cdot)$ is the loss function ($e.g.$, cross entropy).

Following previous works ~\cite{domain_MI} in domain adaption and generalization, we propose to optimize the upper bound approximation for $I(\bm{z};\bm{\hat{z}})$ instead of itself and leverage a transformation module consisting of multiple convolutional operations as $G_{d}(\cdot;\bm{\theta})$ to generate the domain-watermarked image $\bm{\hat{x}}$. Specifically, we aim to craft $\bm{\hat{x}}$ via minimizing the upper bound approximation for mutual information ~\cite{cheng2020club} between $\bm{x} \in\mathcal{D}$ and $\bm{\hat{x}}$ in the latent feature space $\mathbb{Z}$: 
\begin{align}
    \label{eq:MI}
    I(\bm{z};\bm{\hat{z}})=\mathbb{E}_{p(\bm{z},\bm{\hat{z}})}\left[\text{log}~\frac{p(\bm{\hat{z}}|\bm{z})} {p(\bm{\hat{z}})}\right] \leq \mathbb{E}_{p(\bm{z},\bm{\hat{z}})} [\text{log}~p(\bm{\hat{z}}|\bm{z})]-\mathbb{E}_{p(\bm{z})p(\bm{\hat{z}})}[\text{log}~p(\bm{\hat{z}}|\bm{z})],
\end{align}
where $\bm{z}$ and $\bm{\hat{z}}$ are the latent vectors obtained by passing $\bm{x}$ and $\bm{\hat{x}}$ through $f(\cdot;\bm{w})$'s feature extractor. 

$\mathcal{L}_{c}(\cdot)$ is the class-conditional maximum mean discrepancy (MMD) computed on the latent space $\mathbb{Z}$ and proposed to limit the potential semantic information distortion between $\bm{x}$ and $\bm{\hat{x}}$, follows:
\vspace{-1mm}
\begin{equation}
    \label{eq:const}
    \mathcal{L}_{c}(z,\hat{z}) = \frac{1}{K}\sum_{j=1}^{K} \left(||\frac{1}{n_{s}^{j}}\sum_{i=1}^{n_s^j}\phi(\bm{z_{i}^{j}})-\frac{1}{n_{t}^{j}}\sum_{i=1}^{n_t^j}\phi(\bm{\hat{z}_{i}^{j}})||^{2}\right), 
\end{equation}
where $n_{s}^{j}$, $n_{t}^{j}$ represent the number for $\bm{x}$ and $\bm{\hat{x}}$ from class $j$, and $\phi(\cdot)$ is the kernel function. 

The configurations, parameter selections, and model architectures are included in Appendix \ref{app:HardDomain_details}.

\subsection{Generating the Protected Dataset}

Once we obtain the hard-generalized domain generator $G_d$ with the method proposed in Section \ref{sec:HGD}, the next step is to generate the protected dataset based on it. Before we present its technical details, we first deliver some insight into the data quantity impact for the domain watermark.

\begin{theorem}[Data Quantity Impact]\label{thm:DQD} Suppose in PAC Bayesian~\cite{pac}, for a target domain $\mathcal{T}$ and a source domain $\mathcal{S}$, any set of voters (candidate models) $\mathcal{H}$, any prior $\pi$ over $\mathcal{H}$ before any training, any $\xi \in (0, 1]$, any $c>0$, with a probability at least $1-\xi$ over the choices of $S \sim \mathcal{S}^{n_s}$ and $T \sim \mathcal{T}^{n_t}_{\mathcal{X}}$, for the posterior $f$ over $\mathcal{H}$ after the joint training on $S$ and $T$, we have
\begin{equation}
\begin{aligned}
\mathcal{R}_{\mathcal{T}}\left(f\right) & \leq \frac{c}{2(1-e^{-c})} \widehat{\mathcal{R}}_T(f)+\frac{c}{1-e^{-c}} \beta_{\infty}(\mathcal{T}\|\mathcal{S})\widehat{\mathcal{R}}_S(f)+ \Omega\\
& +\frac{1}{1-e^{-c}}\left(\frac{1}{n_t}+\frac{\beta_{\infty}(\mathcal{T}\|\mathcal{S})}{n_s}\right)\left(2 \mathrm{KL}(f \| \pi)+\ln \frac{2}{\xi}\right),
\end{aligned}
\label{theorem11}
\end{equation}
where $\widehat{\mathcal{R}}_T(f)$ and $\widehat{\mathcal{R}}_S(f)$ are the target and source empirical risks measured over target and source datasets $T$ and $S$, respectively. $\Omega$ is a constant and $\mathrm{KL}(\cdot)$ is the Kullback–Leibler divergence. $\beta_{\infty}(\mathcal{T}\|\mathcal{S})$ is a measurement of discrepancy between $\mathcal{T}$ and $\mathcal{S}$ defined as
\begin{equation}
\beta_{\infty}(\mathcal{T} \| \mathcal{S})=\sup _{({\bm x}, y) \in \mathrm{SUPP}(\mathcal{S})}\left(\frac{\mathcal{P}_{({\bm x}, y) \in \mathcal{T}}}{\mathcal{P}_{({\bm x}, y) \in \mathcal{S}}}\right) \geq 1,
\label{theorem12}
\end{equation}
where $\mathrm{SUPP}(\mathcal{S})$ denotes the support of $\mathcal{S}$. When $\mathcal{S}$ and $\mathcal{T}$ are identical, $\beta_{\infty}(\mathcal{T} \| \mathcal{S}) = 1$. 
\end{theorem}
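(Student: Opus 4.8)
The plan is to obtain this bound as a two-distribution, importance-weighted instance of Catoni's PAC-Bayesian inequality, in the spirit of the domain-adaptation analysis of~\cite{pac}. Two ingredients will carry the argument: a \textbf{change-of-measure} step that re-expresses the target population risk over the source distribution through the density ratio, and a \textbf{Catoni concentration} step that trades a population risk for an empirical risk plus a $\mathrm{KL}$-penalty. The constant $\Omega$ will collect the slack produced when the exponential Catoni functional $\Phi_c(r)=\tfrac{1-e^{-cr}}{1-e^{-c}}$ is linearized, together with the irreducible adaptability gap between the two domains.

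First I would record the change-of-measure inequality. Since the $0$--$1$ loss is bounded and $\beta_{\infty}(\mathcal{T}\|\mathcal{S})$ is, by~\eqref{theorem12}, the essential supremum of the ratio $\mathcal{P}_{\mathcal{T}}/\mathcal{P}_{\mathcal{S}}$ over $\mathrm{SUPP}(\mathcal{S})$, a direct reweighting (valid whenever $\mathcal{T}$ is absolutely continuous with respect to $\mathcal{S}$ on that support) gives
\begin{equation*}
\mathcal{R}_{\mathcal{T}}(f)=\mathbb{E}_{(\bm{x},y)\sim\mathcal{S}}\!\left[\frac{\mathcal{P}_{(\bm{x},y)\in\mathcal{T}}}{\mathcal{P}_{(\bm{x},y)\in\mathcal{S}}}\,\mathbb{I}\{C(\bm{x})\neq y\}\right]\leq \beta_{\infty}(\mathcal{T}\|\mathcal{S})\,\mathcal{R}_{\mathcal{S}}(f).
\end{equation*}
This lets the target risk be controlled through the source sample at the price of the multiplicative factor $\beta_{\infty}$, which explains why $\beta_{\infty}$ multiplies both the source empirical risk $\widehat{\mathcal{R}}_S(f)$ and the source complexity term $\beta_{\infty}/n_s$: bounding the importance-weighted source loss by $\beta_{\infty}\,\widehat{\mathcal{R}}_S(f)$ pushes the worst-case density ratio onto every source-side quantity, including the concentration penalty.

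Next I would apply the Catoni PAC-Bayesian bound twice, once to the labeled target (generated) sample $T$ and once to the source sample $S$, each at confidence level $\xi/2$, and combine them under a union bound (this produces the $\ln\tfrac{2}{\xi}$ term and the shared $\tfrac{1}{1-e^{-c}}$ prefactor). Splitting $\mathcal{R}_{\mathcal{T}}(f)$ into a part estimated directly from the target sample and a part routed through the source via the inequality above, linearizing $\Phi_c$ on each piece, and collecting terms should yield the target coefficient $\tfrac{c}{2(1-e^{-c})}$, the source coefficient $\tfrac{c}{1-e^{-c}}\beta_{\infty}$, and the $\bigl(\tfrac{1}{n_t}+\tfrac{\beta_{\infty}}{n_s}\bigr)\bigl(2\,\mathrm{KL}(f\|\pi)+\ln\tfrac{2}{\xi}\bigr)$ complexity term.

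I expect the main obstacle to be the bookkeeping that pins down the exact constants rather than any single deep step: choosing the weighting between the target-estimated and source-routed halves so that the asymmetric coefficients ($\tfrac12$ on the target side, full $\beta_{\infty}$ on the source side) emerge, and verifying that the density-ratio factor propagates into the $1/n_s$ complexity term through the range of the importance-weighted loss. The delicate point is checking that the linearization slack of $\Phi_c$ and the domain adaptability gap are genuinely $f$-independent, so that they can be absorbed into the single constant $\Omega$; everything else is a routine composition of the change-of-measure and Catoni inequalities.
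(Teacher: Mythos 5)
Your general toolkit (Catoni's PAC-Bayesian inequality applied to the source and target samples at level $\xi/2$ each, plus a change-of-measure step controlled by $\beta_{\infty}$) is indeed the machinery behind this bound, but note first that the paper does not re-derive it: its proof invokes Theorem~6 of Germain \etal~\cite{germain2016new}, stated in terms of the empirical target \emph{voters' disagreement} $\widehat{\mathrm{d}}_T(f)$ and the empirical source \emph{joint error} $\widehat{\mathrm{e}}_S(f)$, and then passes to Gibbs risks via the identity $\mathcal{R}_{\mathcal{A}}(f)=\tfrac12\,\mathrm{d}_{\mathcal{A}}(f)+\mathrm{e}_{\mathcal{A}}(f)$. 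That decomposition is the piece your sketch is missing, and it is not mere bookkeeping. The target risk is split as half the expected pairwise disagreement --- estimable from the \emph{unlabeled} target sample, which is exactly why the statement draws $T\sim\mathcal{T}^{n_t}_{\mathcal{X}}$ from the marginal --- plus the joint error, and only the joint error is routed through the source via $\mathrm{e}_{\mathcal{T}}(f)\le\beta_{\infty}(\mathcal{T}\|\mathcal{S})\,\mathrm{e}_{\mathcal{S}}(f)$ (up to the out-of-support mass, which is what $\Omega$ absorbs). This is what produces the asymmetric coefficients $\tfrac{c}{2(1-e^{-c})}$ and $\tfrac{c}{1-e^{-c}}\beta_{\infty}$. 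The convex split you describe --- estimate a $\lambda$-fraction of $\mathcal{R}_{\mathcal{T}}$ on the target and route the remainder through $\beta_{\infty}\mathcal{R}_{\mathcal{S}}$ --- cannot reproduce them: taking $\lambda=\tfrac12$ to match the target coefficient forces a source coefficient of $\tfrac{c}{2(1-e^{-c})}\beta_{\infty}$, half of what the theorem asserts, and it would also require labeled target data. So the obstacle you flag as ``bookkeeping'' is in fact the load-bearing idea, and it is resolved by a different decomposition than the one you propose.

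A second, smaller gap: the statement also asserts $\beta_{\infty}(\mathcal{T}\|\mathcal{S})\ge 1$, with equality when $\mathcal{S}=\mathcal{T}$, and your proposal uses $\beta_{\infty}$ only as an essential supremum without ever establishing this. The paper proves it by rewriting $\beta_q(\mathcal{T}\|\mathcal{S})=2^{\frac{q-1}{q}D_q(\mathcal{T}\|\mathcal{S})}$ in terms of the R\'enyi divergence and invoking its nonnegativity (and its vanishing exactly when the two distributions coincide) in the limit $q\to\infty$; some such argument needs to appear for the theorem as stated to be fully proved.
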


Theorem \ref{thm:DQD} reveals the upper bound of $\mathcal{R}_{\mathcal{T}}\left(f\right)$ is negatively associated with the number of samples for source and target domains ($i.e.$, $n_t$ and $n_s$). Assuming $n_t$ is fixed, increasing $n_s$ can still increase generalization on the target domain. As such, it is possible to combine some domain-watermarked samples with benign samples to achieve target domain generalization. Its proof is in Appendix \ref{app:thm1_proof}.


In general, the most straightforward method to generate our domain watermark for the protected dataset is to \emph{randomly select a few samples $(\bm{x}, y)$ from the original dataset $\mathcal{D}$ and replace them with their domain-watermarked version $(G_d(\bm{x}), y)$}. However, as we will show in the experiments, the domain-watermarked image is usually significantly different from its original version. Accordingly, the adversaries may notice watermarked samples and try to remove them to bypass our defense. To ensure the stealthiness of our domain watermark, we propose to \emph{optimize a set of visually-indistinguishable modified data $\{(\bm{x}_i', y_i) |\bm{x}_i'=\bm{x}_i+\bm{\delta}_i\}$ having similar effects to domain-watermarked samples}. This is also a bi-level optimization problem, as follows.

\vspace{-4mm}
\begin{align}
    \label{eq:gm}
    &\min_{\bm{\delta} \subset \mathcal{B}}~\left[\mathbb{E}_{(\bm{\hat{x}},y)\sim\mathcal{T}} [ \mathcal{L}\left(f(\bm{\hat{x}};\bm{w}(\bm{\delta})), y \right)]-\lambda_3 \min\left\{\mathbb{E}_{(\overline{\bm{x}}, y)\sim \overline{\mathcal{T}}} [ \mathcal{L}\left(f(\overline{\bm{x}};\bm{w}(\bm{\delta})), y \right)],\lambda_4 \right\}\right],\\
    &s.t.~\bm{w}(\bm{\delta}) = \arg \min_{\bm{w}} \left[\frac{1}{|\mathcal{D}_{s}|}\sum_{(\bm{x}_i,y_i)\in \mathcal{D}_{s}} \mathcal{L}\left(f(\bm{x}_i+\bm{\delta}_i;\bm{w}),y_{i}\right)+\frac{1}{|\mathcal{D}_{b}|}\sum_{(\bm{x_{j}},y_{j})\in \mathcal{D}_{b}} \mathcal{L}\left(f(\bm{x}_{j};\bm{w}),y_{j}\right) \right],\notag
\end{align}
where $\mathbb{E}_{(\overline{\bm{x}}, y)\sim \overline{\mathcal{T}}} [ \mathcal{L}\left(f(\overline{\bm{x}};\bm{w}(\bm{\delta})), y \right)]$ represents the expected risk for the watermarked model on other unseen domains (\ie,$\overline{\mathcal{T}}$) and $\mathcal{B}=\{\bm{\delta}:||\bm{\delta}||_{\infty}\leq \epsilon\}$ where $\epsilon$ is a visibility-related hyper-parameter.

The second term in Eq.(\ref{eq:gm}) is to prevent the watermarked model can achieve a similar generalization performance on other unseen domains as the target domain $\mathcal{T}$ to preserve the uniqueness of $\mathcal{T}$ for verification purposes. We introduce two parameters $\lambda_3$ and $\lambda_4$ for preventing the second term dominant in the optimization procedure. $\lambda_4$ is set as $\mathbb{E}_{(\overline{\bm{x}}, y)\sim \overline{\mathcal{T}}} [ \mathcal{L}\left(f(\overline{\bm{x}};\bm{w}^{*}), y \right)]$, where $\bm{w^{*}}$ is obtained by training on the original dataset $\mathcal{D}$. Please find more optimization details in Appendix \ref{sec:PD_details}. 

In particular, our domain watermark is clean-label, $i.e.$, we do not modify the label of modified samples as have done in most backdoor-based methods. As such, it is more stealthy.


\begin{figure}[!t]
    \centering

    \includegraphics[width=1\textwidth]{./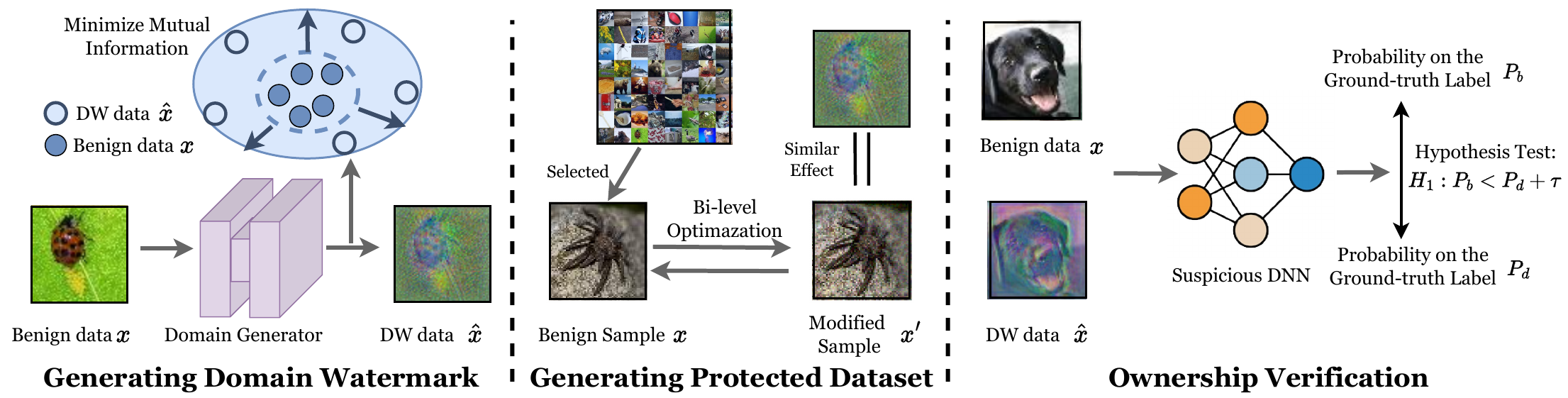}

    \caption{The workflow of dataset ownership via our domain watermark. In the first step, we will generate domain-watermarked (DW) samples in a hardly-generalized domain of the benign dataset; In the second step, we will optimize a set of visually-indistinguishable modified samples that have similar effects to domain-watermarked samples. We will release those modified samples associated with remaining benign samples instead of the original dataset for copyright protection; In the third step, we identify whether a given third-party model is trained on our protected dataset by testing whether it has similar prediction behaviors in benign images and their DW version. }
    \label{fig:ow}

\end{figure}

\section{Dataset Ownership Verification via Domain Watermark}
In this section, we introduce how to conduct dataset ownership verification via our domain watermark. The overview of the entire procedure is shown in Figure \ref{fig:ow}.

As described in Section \ref{sec:pf}, models trained on our protected dataset (with domain watermark) can correctly classify some domain-watermarked samples while other benign models cannot. Accordingly, given a suspicious third-party model $f$, the defenders can verify whether it was trained on the protected dataset by examining whether the model has similar prediction behaviors on benign samples and their domain-watermarked version. \emph{The model is regarded as trained on the protected dataset if it has similar behaviors}. To verify it, we design a hypothesis-test-guided method following previous works \cite{li2023black,li2022untargeted}, as follows.

\vspace{0.3em}
\begin{proposition}
Suppose $f(\bm{x})$ is the posterior probability of $\bm{x}$ predicted by the suspicious model. Let variable $\bm{X}$ denotes the benign image and variable $\bm{X}'$ is its domain-watermarked version ($i.e.$, $\bm{X}'=G_d(\bm{X})$), while variable $P_b=f(\bm{X})_{Y}$ and $P_d=f(\bm{X}')_{Y}$ indicate the predicted probability on the ground-truth label $Y$ of $\bm{X}$ and $\bm{X}'$, respectively. Given the null hypothesis $H_0: P_b = P_d + \tau$ ($H_1: P_b  < P_d + \tau$) where the hyper-parameter $\tau \in [0,1]$, we claim that the suspicious model is trained on the protected dataset (with $\tau$-certainty) if and only if $H_0$ is rejected. 
\end{proposition}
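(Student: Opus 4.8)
The plan is to treat this proposition as the formal justification of a one-sided paired statistical test and to establish the claimed equivalence through the domain-watermark characterization in Definition \ref{def:DW}. The central observation is that $P_b - P_d$ is a continuous surrogate for the indicator gap $\mathbb{I}\{\hat{C}(\bm{x}) = y\} - \mathbb{I}\{\hat{C}(G_d(\bm{x})) = y\}$ appearing in property \textbf{(2)} of Definition \ref{def:DW}: a model that has learned the hardly-generalized domain assigns nearly the same confidence to $\bm{X}$ and to $\bm{X}' = G_d(\bm{X})$ on the true label $Y$, whereas a benign model, which by the hardly-generalized-domain assumption satisfies $\sum \mathbb{I}\{C(\bm{x}_i) = y_i\} \gg \sum \mathbb{I}\{C(G_d(\bm{x}_i)) = y_i\}$, assigns a much larger confidence to $\bm{X}$ than to $\bm{X}'$. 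Hence the position of $\mathbb{E}[P_b - P_d]$ relative to the threshold $\tau$ is exactly what separates the two regimes.

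First I would fix the sampling scheme: draw $m$ benign images $\{\bm{x}^{(i)}\}_{i=1}^m$, form their domain-watermarked counterparts $\bm{x}'^{(i)} = G_d(\bm{x}^{(i)})$, and record the paired differences $\Delta^{(i)} = f(\bm{x}^{(i)})_{y^{(i)}} - f(\bm{x}'^{(i)})_{y^{(i)}}$, which are i.i.d.\ realizations of $P_b - P_d$. Under $H_0$ we have $\mathbb{E}[\Delta] = \tau$ and under $H_1$ we have $\mathbb{E}[\Delta] < \tau$. Next I would invoke the central limit theorem so that the studentized statistic $t = (\bar{\Delta} - \tau)/(s/\sqrt{m})$ is a one-sample $t$-statistic with $m-1$ degrees of freedom, and reject $H_0$ at significance level $\alpha$ exactly when $t$ falls below the corresponding critical value; here $\alpha$ controls the false-positive rate of the test while the margin $\tau$ fixes the certainty level of the verification claim.

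The equivalence then follows in two directions. For the forward direction, if the suspicious model was trained on the protected dataset it satisfies property \textbf{(2)} of Definition \ref{def:DW} with parameter $\tau$, so $\mathbb{E}[P_b - P_d] \le \tau$; since the benign-versus-watermarked confidence gap of a genuinely watermarked model sits strictly below the boundary value $\tau$, the sample mean concentrates below $\tau$ and $H_0$ is rejected with high probability. Conversely, rejecting $H_0$ certifies $\mathbb{E}[P_b - P_d] < \tau$, i.e.\ the model reproduces the matched behavior on $\bm{X}$ and $G_d(\bm{X})$ demanded by the domain watermark, which is exactly the verification criterion, so the model is declared trained on the protected dataset. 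I would close by noting that the rule uses only the predicted probability vectors returned by the model API, consistent with the black-box threat model.

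The step I expect to be the main obstacle is making the ``if and only if'' precise, since it couples a deterministic definitional property (Definition \ref{def:DW}) with a randomized statistical decision. The cleanest fix is to read ``trained on the protected dataset with $\tau$-certainty'' as the statement $\mathbb{E}[P_b - P_d] < \tau$ and to show that the one-sided $t$-test is a consistent detector of which side of $\tau$ the true mean lies on, so that as $m \to \infty$ the acceptance region coincides with the event $\{\mathbb{E}[P_b - P_d] < \tau\}$ almost surely. A secondary subtlety is justifying the surrogate relation between the soft probabilities $P_b, P_d$ and the hard indicators of Definition \ref{def:DW}, which I would handle by assuming the model is calibrated enough that correct classification corresponds to dominant confidence on $Y$.
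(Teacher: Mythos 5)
The paper does not actually prove this proposition; it functions as the \emph{specification} of the decision rule (``we claim \dots if and only if $H_0$ is rejected'' is the defenders' declared policy), and the only quantitative content the paper attaches to it is deferred to Theorem~\ref{thm2}, which computes when rejection is achievable given a verification success rate. Your reconstruction of the mechanics --- paired differences $\Delta^{(i)}$, one-sided studentized statistic with $m-1$ degrees of freedom, rejection below the critical value --- matches what the paper describes informally around the proposition and carries out in the proof of Theorem~\ref{thm2}, so that part is faithful.

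The genuine gap is in your attempt to establish the biconditional as a theorem. The ``only if'' direction as you state it does not go through: rejecting $H_0$ at level $\alpha$ does not \emph{certify} $\mathbb{E}[P_b - P_d] < \tau$; it only provides evidence at confidence $1-\alpha$, and even granting $\mathbb{E}[P_b - P_d] < \tau$, the inference ``therefore the model was trained on the protected dataset'' is not deductively valid --- a model could in principle exhibit a small confidence gap on the hardly-generalized domain for other reasons (strong augmentation, domain adaptation), a possibility the paper explicitly acknowledges and argues is merely improbable, not impossible. Likewise the forward direction only yields rejection ``with high probability,'' not always. Your closing remark --- that the clean fix is to \emph{read} ``trained on the protected dataset with $\tau$-certainty'' as the event $\mathbb{E}[P_b-P_d]<\tau$, making the test a consistent detector of that event --- is in fact the right resolution and is essentially what the paper does implicitly, but it converts the proposition into a definition rather than proving it; you should present it as such rather than as a two-directional proof. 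One further small mismatch: property \textbf{(2)} of Definition~\ref{def:DW} bounds a gap of hard indicators over the training distribution, whereas the test operates on soft probabilities $P_b, P_d$ on held-out samples; your calibration assumption papers over this, but the paper never claims the two $\tau$'s are the same parameter, and Theorem~\ref{thm2} instead introduces a separate threshold $\eta$ on $P_b$ to bridge indicators and probabilities.
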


In practice, we randomly sample $m$ different benign samples to conduct the pairwise T-test \cite{schmetterer2012introduction} and calculate its p-value. The null hypothesis $H_0$ is rejected if the p-value is smaller than the significance level $\alpha$. Besides, we also calculate the \emph{confidence score} $\Delta P = P_b - P_d$ to represent the verification confidence. \emph{The smaller the $\Delta P$, the more confident the verification}.

\begin{theorem}\label{thm2}
Let $f(\bm{x})$ is the posterior probability of $\bm{x}$ predicted by the suspicious model, variable $\bm{X}$ denotes the benign sample with label $Y$, and variable $\bm{X}'$ is the domain-watermarked version of $\bm{X}$. Assume that $P_b\triangleq f(\bm{X})_{Y} > \eta$. We claim that dataset owners can reject the null hypothesis $H_0$ at the significance level $\alpha$, if the verification success rate (VSR) $V$ of $f$ satisfies that
\begin{equation}
\sqrt{m-1} \cdot (V- \eta + \tau) - t_{\alpha} \cdot \sqrt{V-V^2} > 0,
\end{equation}
where $t_{\alpha}$ is $\alpha$-quantile of t-distribution with $(m-1)$ degrees of freedom and $m$ is sample size.
\end{theorem}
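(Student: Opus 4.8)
The plan is to translate the rejection rule of the one-sided pairwise (one-sample) $t$-test described right after the Proposition into an explicit condition on the verification success rate $V$. First I would set up the test statistic. For the $m$ sampled pairs $(\bm{X}_i,\bm{X}_i')$, write $D_i \triangleq P_b^{(i)} - P_d^{(i)}$ for the per-sample confidence gap, so that the null $H_0: P_b = P_d + \tau$ becomes $H_0: \mathbb{E}[D] = \tau$ tested against $H_1: \mathbb{E}[D] < \tau$. The $t$-statistic is $t = \sqrt{m}\,(\bar{D}-\tau)/s_D$, where $\bar{D}$ and $s_D^2 = \frac{1}{m-1}\sum_i (D_i-\bar{D})^2$ are the sample mean and unbiased sample variance of the $D_i$, and $H_0$ is rejected at level $\alpha$ precisely when $t < -t_\alpha$, with $t_\alpha$ the critical value of the $t$-distribution with $m-1$ degrees of freedom.

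Next I would express $\bar{D}$ and $s_D$ through $\eta$ and $V$. Using the hypothesis $P_b^{(i)} > \eta$, I replace the benign confidences by their guaranteed value, so that $\overline{P_b} = \eta$; this is the only place the assumption enters. Modelling each verification outcome as a Bernoulli trial — $P_d^{(i)} \in \{0,1\}$ with empirical success proportion equal to the VSR $V$ — gives $\overline{P_d} = V$, hence $\bar{D} = \eta - V$. The same Bernoulli structure controls the variance: since $\sum_i (P_d^{(i)} - V)^2 = m\,V(1-V)$ for $0$–$1$ valued $P_d^{(i)}$ with mean $V$, one obtains $s_D^2 = \frac{m}{m-1}V(1-V)$, i.e. $s_D = \sqrt{\tfrac{m}{m-1}}\,\sqrt{V-V^2}$.

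Finally I would substitute these into the statistic and clear denominators. Plugging $\bar{D} = \eta - V$ and $s_D$ into $t = \sqrt{m}\,(\bar{D}-\tau)/s_D$, the two factors of $\sqrt{m}$ cancel against the $\sqrt{m/(m-1)}$ inside $s_D$, leaving $t = \sqrt{m-1}\,(\eta - V - \tau)/\sqrt{V-V^2}$. Imposing the rejection rule $t < -t_\alpha$ and multiplying through by the positive quantity $\sqrt{V-V^2}$ rearranges directly to $\sqrt{m-1}\,(V-\eta+\tau) - t_\alpha\sqrt{V-V^2} > 0$, which is the claimed inequality.

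The main obstacle, and the step deserving the most care, is the variance computation: recognizing that treating each verification outcome as a Bernoulli trial makes the unbiased sample variance equal to $\frac{m}{m-1}V(1-V)$ is exactly what produces both the $\sqrt{V-V^2}$ factor and, after cancellation with $\sqrt{m}$, the $\sqrt{m-1}$ coefficient. A secondary point to pin down is the bookkeeping of the critical-value sign: one must be consistent about whether $t_\alpha$ denotes the negative lower $\alpha$-quantile or its magnitude, since the final inequality is stated with $-t_\alpha$ acting as a positive critical value. I would also flag explicitly that the substitution $\overline{P_b} = \eta$ uses the bound $P_b > \eta$ as the operative value for the benign confidence, so that the role of the assumption in the derivation is transparent.
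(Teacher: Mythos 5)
Your proposal is correct and follows essentially the same route as the paper: the paper likewise uses $P_b>\eta$ to reduce the test to the one-sided condition $P_d>\eta-\tau$, models each verification outcome as a Bernoulli indicator so that the sample variance is $\tfrac{m}{m-1}(V-V^2)$, and obtains the stated inequality from the rejection region of the resulting $t$-statistic. The points you flag (the Bernoulli variance producing the $\sqrt{m-1}$ factor and the sign convention for $t_\alpha$) are exactly the places where the paper's own derivation does the corresponding bookkeeping.
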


In general, Theorem \ref{thm2} indicates that our dataset verification can succeed if the VSR of the suspicious model $f$ is higher than a threshold (which is not necessarily 100\%). 
In particular, the assumption of Theorem \ref{thm2} can be easily satisfied by using benign samples that can be correctly classified with high confidence. Its proof is included in Appendix \ref{sec:thm2_proof}.

\begin{figure}[!t]

    \centering    \includegraphics[width=1\textwidth]{./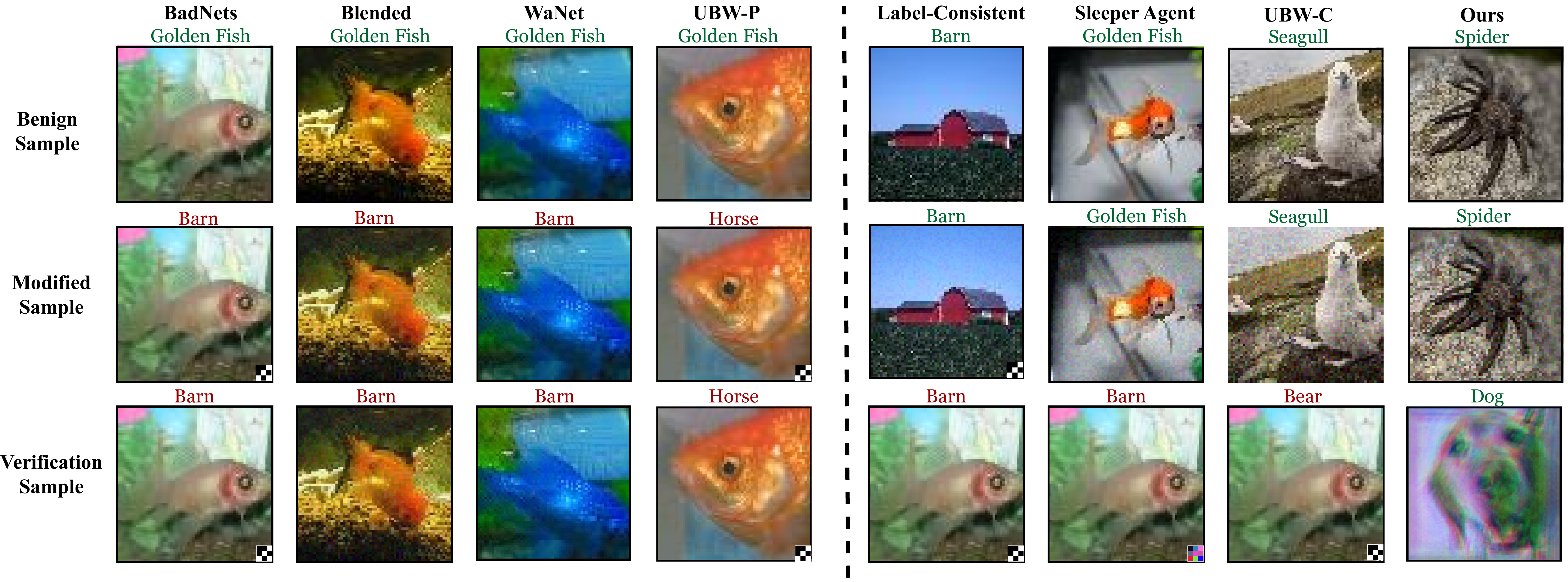}

    \caption{The example of samples involved in different methods of dataset ownership verification.}
    \label{fig:trigger}
   
\end{figure}

\section{Experiments}
In this section, we conduct experiments on CIFAR-10~\cite{krizhevsky2009learning} and Tiny-ImageNet~\cite{le2015tiny} with VGG~\cite{simonyan2014very} and ResNet~\cite{he2016deep}, respectively. Results on STL-10~\cite{coates2011analysis} are in Appendix \ref{sec:STL-10}.

\subsection{The Performance of Domain Watermark}
\noindent \textbf{Settings.} We select seven baseline methods, containing three clean-label backdoor watermarks ($i.e.$, Label-Consistent, Sleeper Agent, and UBW-C) and four poisoned-label watermarks ($i.e.$, BadNets, Blended, WaNet, and UBW-P). Following the previous work~\cite{li2022untargeted}, we set the watermarking rate $\gamma=0.1$, perturbation constraint $\epsilon=16/255$ in all cases, and adopt the same watermark patterns and parameters. The example of samples used in different watermarks is shown in Figure \ref{fig:trigger}. For our method, we set $\lambda_{3}=0.3$. We implement all baseline methods based on \texttt{BackdoorBox} \cite{li2023backdoorbox}. Each result is averaged over five runs. Please find more details in Appendix \ref{sec:setting_details}.


\noindent \textbf{Evaluation Metrics.} We adopt benign accuracy (BA) and verification success rate (VSR) to verify the effectiveness of dataset watermarks. Specifically, the VSR is defined as the percentage that verification samples can be classified as the assigned label ($i.e.$, target label of baselines and ground-truth label of our method) by watermarked DNNs. We exploit harmful degree ($H \in [0,1]$), and relatively harmful degree ($\hat{H} \in [-1,1]$) to measure watermark harmfulness. In general, the larger the BA and VSR while the smaller the $H$ and $\hat{H}$, the better the dataset watermark.



\begin{table}[!t]
\centering
\caption{The watermark performance on CIFAR-10 and Tiny-ImageNet datasets. In particular, we mark harmful watermark results ($i.e.$, $H > 0.5$ and $\hat{H} > 0$) in red. }
\scalebox{0.82}{
\begin{tabular}{@{}c|c|cc|cc|cc|cc@{}}
\toprule
&& \multicolumn{4}{c|}{CIFAR-10}        & \multicolumn{4}{c}{Tiny-ImageNet}         \\ \hline
\multicolumn{1}{c|}{Label Type$\downarrow$}                                 & Method$\downarrow$, Metric$\rightarrow$   & BA (\%) & VSR (\%) &  $H$ & $\hat{H}$ & BA (\%) & VSR (\%) &  $H$ & $\hat{H}$\\ \hline

\multicolumn{1}{c|}{\multirow{4}{*}{Poisoned-Label}} &   BadNets &  91.54 &  100     &    \textcolor{red}{1.00} & \textcolor{red}{0.91} &60.02 & 100 & \textcolor{red}{1.00} & \textcolor{red}{0.60}  \\
\multicolumn{1}{c|}{}                                                          & Blended      &  91.60 &  99.96     &   \textcolor{red}{1.00} & \textcolor{red}{0.92} & 59.86 & 99.97 & \textcolor{red}{1.00} & \textcolor{red}{0.60}  \\
\multicolumn{1}{c|}{}                                                          & WaNet    &  90.61  &  97.3    &   \textcolor{red}{0.97} &\textcolor{red}{0.87}  & 57.29 & 96.14 & \textcolor{red}{0.96} & \textcolor{red}{0.51}  \\  
\multicolumn{1}{c|}{}                                             &  UBW-P     &  91.47      &  84.52     & \textcolor{red}{0.85} &\textcolor{red}{0.76} & 56.14 & 84.09 & \textcolor{red}{0.84} & \textcolor{red}{0.44}\\ \hline \hline
\multicolumn{1}{c|}{\multirow{4}{*}{Clean-Label}}  &  Label-Consistent&  91.64    & 99.98      &   \textcolor{red}{1.00} &\textcolor{red}{0.91} & 56.92 & 41.76 & \textcolor{red}{0.61} & \textcolor{red}{0.21}  \\
\multicolumn{1}{c|}{}                                                         & Sleeper Agent     &  90.73 & 93.24 & \textcolor{red}{0.93} & \textcolor{red}{0.84} & 56.82 & 87.44 & \textcolor{red}{0.87} & \textcolor{red}{0.47}   \\  
\multicolumn{1}{c|}{}                                           &   UBW-C   &  86.32  &  89.06      &  \textcolor{red}{0.89} &\textcolor{red}{0.80} & 51.79 & 81.20 & \textcolor{red}{0.81} & \textcolor{red}{0.41}   \\  \cline{2-10} \multicolumn{1}{c|}{}                                         &    DW (Ours)  &  90.86   &   90.45    &  0.10 &-0.77 &  59.10 & 58.06 &  0.42 & -0.52   \\
\bottomrule
\end{tabular}
}
\label{tab:attack_cifar10}
\end{table}

\begin{table}[!t]
\centering
\caption{The effectiveness of dataset ownership verification via our domain watermark.}
\vspace{-0.5em}
\scalebox{0.9}{
\begin{tabular}{c|ccc|ccc}
\toprule
& \multicolumn{3}{c|}{CIFAR-10}        & \multicolumn{3}{c}{Tiny-ImageNet}         \\ \hline
 & Independent-D & Independent-M & Malicious & Independent-D & Independent-M & Malicious \\ \hline
$\Delta P$ &       0.79       &       0.80       &       0.04
     &     0.50         &     0.67         &     0.10     \\
p-value      &   1.00           &   1.00       &     $10^{-54}$      &       0.90       &     1.00          &     $10^{-6}$      \\ \bottomrule
\end{tabular}
}
\vspace{-1em}
\label{tab:verification}
\end{table}

\noindent \textbf{Results.} As shown in Table \ref{tab:attack_cifar10}, the benign accuracy of our domain watermark is higher than clean-label backdoor watermarks in most cases, especially on the Tiny-ImageNet dataset. In particular, only our method is harmless. For example, both $H$ and $\hat{H}$ are 0.7 smaller than those of all baseline methods on CIRAR-10 datasets. Besides, as we will show in the next subsection, the VSR of our method is sufficiently high for correct ownership verification, although the VSR of our method is smaller than that of backdoor-based watermarks (especially on complicated datasets). The VSRs of benign models with our domain-watermarked samples on CIFAR-10 and Tiny-ImageNet are merely 13\% and 6\%, respectively. This mild potential limitation is because our VSR is restricted by the BA of watermark models. It is an unavoidable sacrifice for harmlessness.



\subsection{The Performance of Dataset Ownership Verification via Domain Watermark}
\textbf{Settings.} We evaluate our domain-watermark-based dataset ownership verification under three scenarios, including \textbf{1)} independent domain (dubbed `Independent-D'), \textbf{2)} independent model (dubbed `Independent-M'), and \textbf{3)} unauthorized dataset training (dubbed `Malicious'). In the first case, we used domain-watermarked samples to query the suspicious model trained with modified samples from another domain; In the second case, we test the benign model with our domain-watermarked samples; In the last case, we test the domain-watermarked model with corresponding domain-watermarked samples. Notice that only the last case should be regarded as having unauthorized dataset use. More setting detail are described in Appendix \ref{sec:DV_settings}.


\textbf{Evaluation Metrics.} Following the settings in ~\cite{li2022untargeted}, we use $\Delta P \in [-1,1]$ and $p$-value $\in [0,1]$ for the evaluation. For the first two independent scenarios, a large $\Delta P$ and $p$-value are expected. In contrast, for the third scenario, the smaller $\Delta P$ and $p$-value, the better the verification.

\textbf{Results.} As shown in Table \ref{tab:verification}, our method can achieve accurate verification in all cases. Specifically, our approach can identify the unauthorized dataset usage with high confidence (\ie, $\Delta P \approx 0$ and $p$-value $\ll$ 0.01), while not misjudging when there is no unauthorized dataset utilization (\ie, $\Delta P \gg 0$ and $p$-value $\gg$ 0.05). Especially on the CIFAR-10 dataset (with high VSR), the $p$-values of independent cases are already 1 while that of the malicious scene is 50 powers smaller than a correct verification needs. These results verify the effectiveness of our dataset ownership verification.

\subsection{Discussions}

\subsubsection{Ablation Studies}
We hereby discuss the effects of two key hyper-parameters involved in our method (\ie, $\epsilon$ and $\gamma$). Please find more experiments regarding other parameters and detailed settings in Appendix \ref{sec:ablation_new}.


\textbf{Effects of Perturbation Budget $\epsilon$.} We study its effects on both CIFAR-10 and Tiny-ImageNet datasets. As shown in Figure \ref{fig:a}, the VSR increases with the increase of $\epsilon$. In contrast, the BA remains almost stable with different $\epsilon$. However, increasing $\epsilon$ would also reduce the invisibility of modified samples. Defenders should assign it based on their specific needs.

\begin{figure}[!t]
\centering
\begin{minipage}[t]{0.47\linewidth}
\centering
\includegraphics[width=\textwidth]{./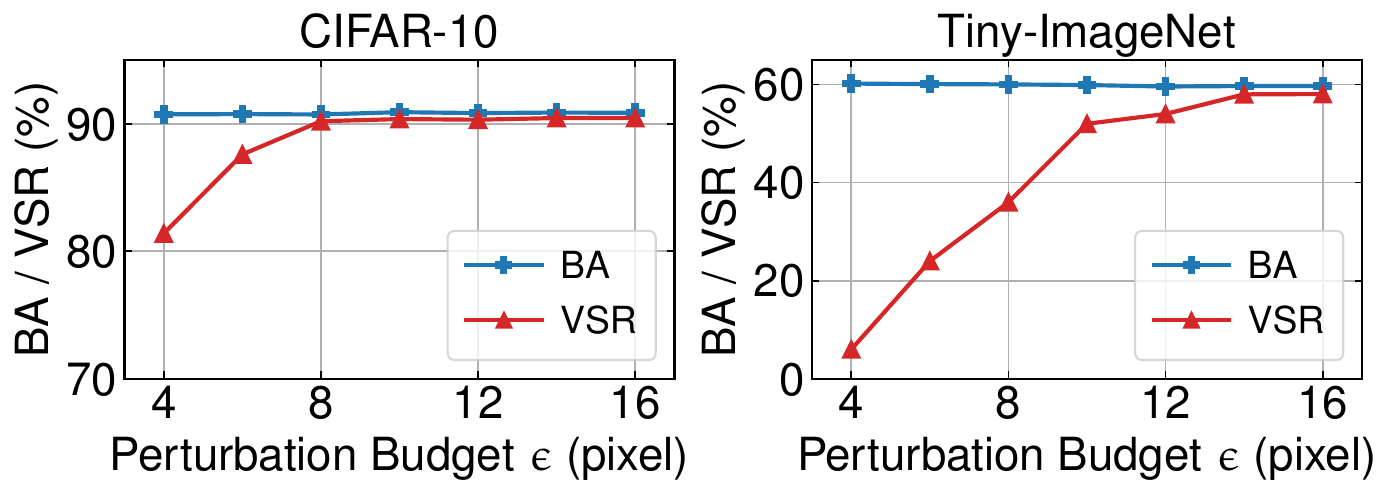}
\vspace{-1.5em}
\caption{Effects of the perturbation budget $\epsilon$.}
\label{fig:a}
\end{minipage}\hspace{1.5em}
\begin{minipage}[t]{0.47\linewidth}
\centering
\includegraphics[width=1\textwidth]{./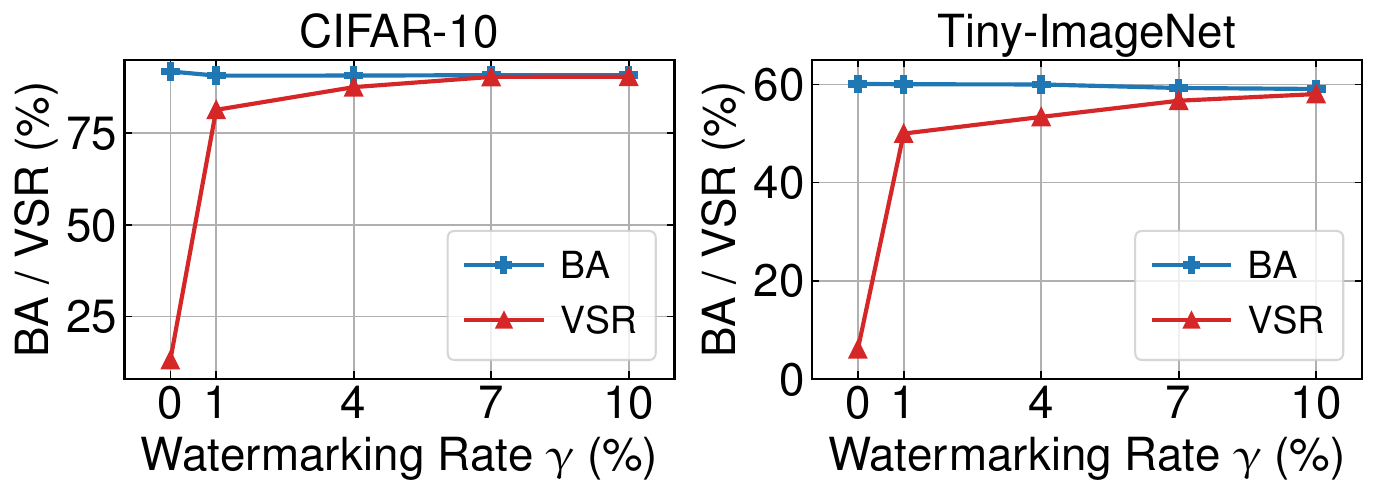}
\vspace{-1.5em}
\caption{Effects of the watermarking rate $\gamma$.}
\label{fig:b}
\end{minipage}

\end{figure}

\begin{figure}[!t]
\centering
\begin{minipage}[t]{0.47\linewidth}
\centering
\includegraphics[width=\textwidth]{./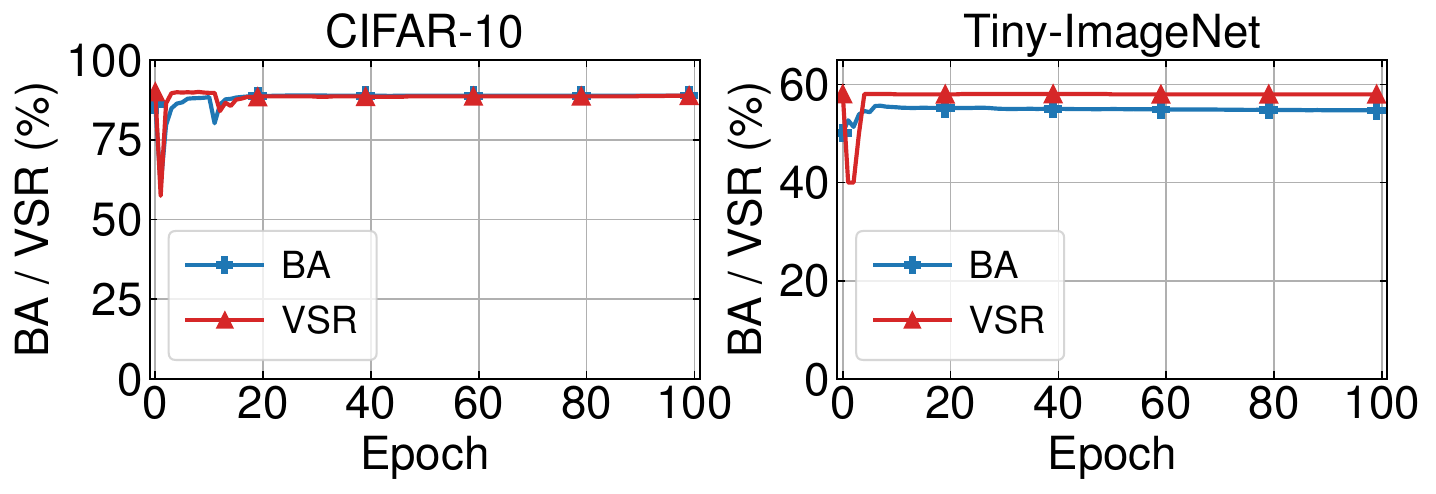}
\vspace{-1.5em}
\caption{ The resistance to fine-tuning.}
\label{fig:c}
\end{minipage}\hspace{1.5em}
\begin{minipage}[t]{0.47\linewidth}
\centering
\includegraphics[width=1\textwidth]{./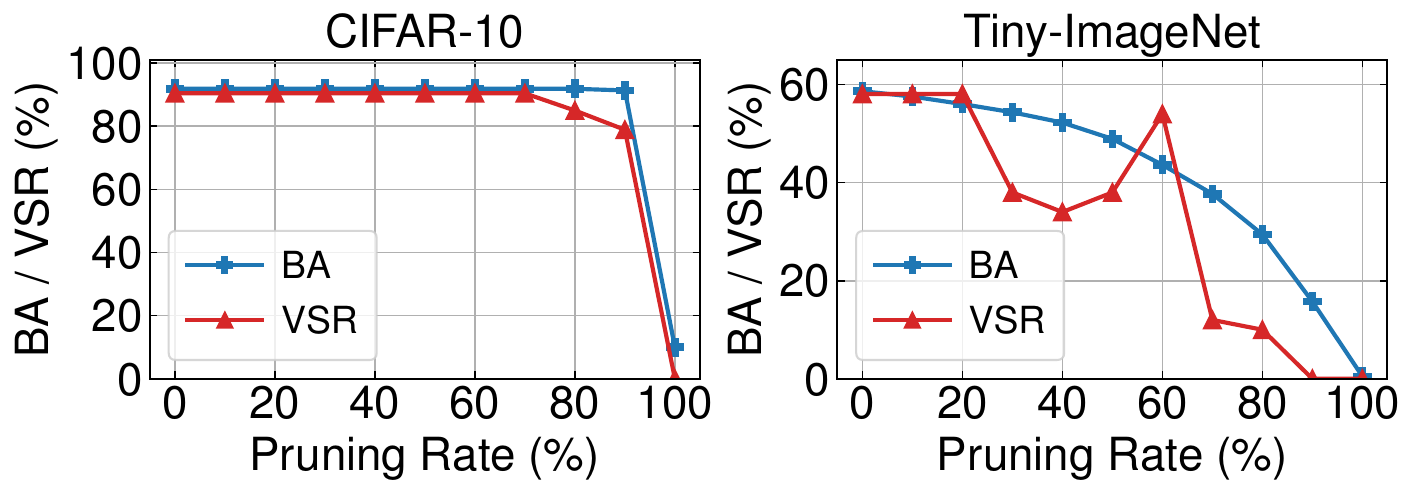}
\vspace{-1.5em}
\caption{ The resistance to model pruning.}
\label{fig:d}
\end{minipage}
\vspace{-0.5em}
\end{figure}

\textbf{Effects of watermarking Rate $\gamma$.} As shown in Figure \ref{fig:c}, similar to the phenomena of $\epsilon$, the VSR increases with the increase of $\gamma$ while the BA remains almost unchanged on both datasets. In particular, even with a low watermarking rate ($e.g.$, 1\%), our method can still have a promising VSR. These results verify the effectiveness of our domain watermark again.

\begin{wrapfigure}{!t}{0.48\textwidth}
\vspace{-6mm}
  \begin{center}
    \includegraphics[width=0.48\textwidth]{./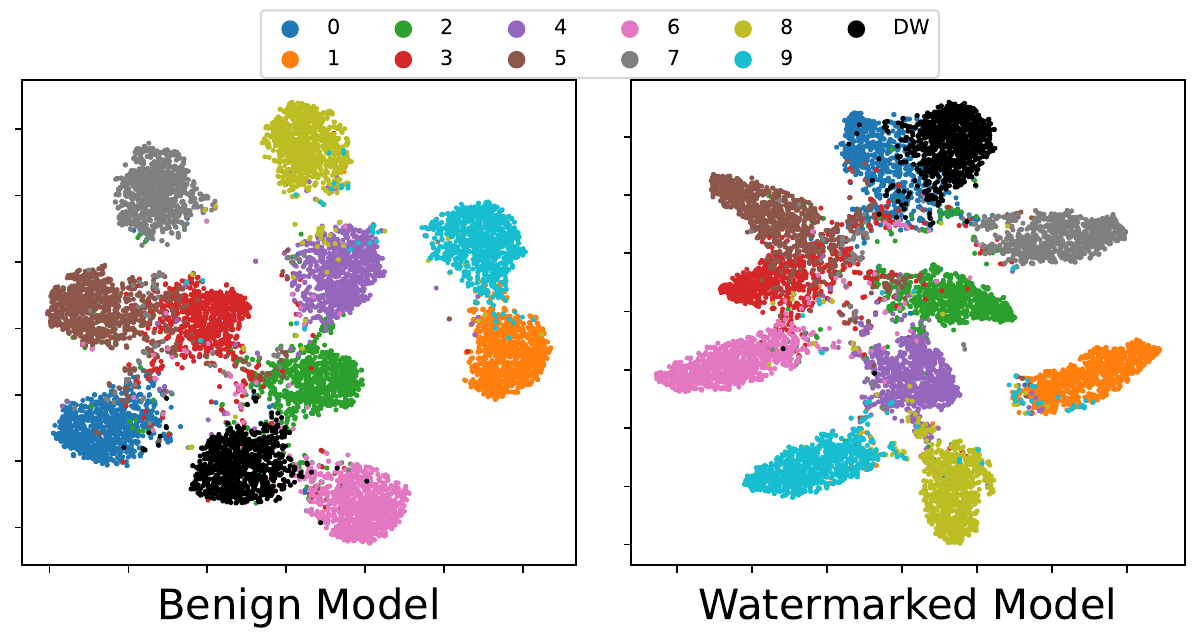}
  \end{center}
  \vspace{-1em}
  \caption{The t-SNE of feature representations of samples for benign and watermarked models on the CIFAR-10 dataset. The target label is `0'.}
 \label{fig:t-sne}
\end{wrapfigure}
\subsubsection{The Resistance to Potential Adaptive Methods}
We notice that the adversaries may try to detect or even remove our domain watermark based on existing methods in practice. In this section, we discuss whether our method is resistant to them.


Due to the limitation of space, following the previous work \cite{li2022untargeted}, we only evaluate the robustness of our domain watermark under fine-tuning \cite{liu2017neural} and model pruning \cite{wu2021adversarial} in the main manuscript. As shown in Figure \ref{fig:c}, fine-tuning has minor effects on both the VSR and the BA of our method. Our method is also resistant to model pruning for the BA decreases with the decrease of VSR. We have also evaluated our domain watermark to more other representative adaptive methods. Please find more setting details and results in our Appendix \ref{sec:adaptive_new}.


\subsubsection{A Closer Look to the Effectiveness of our Method}
In this section, we intend to further explore the mechanisms behind the effectiveness of our domain watermark. Specifically, we adopt t-SNE~\cite{van2008visualizing} to visualize the feature distribution of different types of samples generated by the benign model and its domain-watermarked version. As shown in Figure \ref{fig:t-sne}, the domain-watermarked samples stay away (with the normalized distance as 1.84) from those with their ground-truth label (\ie, `0'), although they still cluster together, under the benign model. In contrast, these domain-watermarked samples lay close (with the normalized distance as 0.40) to benign samples having the same class under the watermarked model. These phenomena are consistent with the predictive behaviors of the two models and can partly explain the mechanism of our domain watermark. We will further explore it in our future works.



\section{Conclusion}
In this paper, we revisited the dataset ownership verification (DOV). We revealed the harmful nature of existing backdoor-based methods because their principle is making watermarked models misclassify `easy' samples. To design a genuinely harmless DOV method, we proposed the domain watermark by leading watermarked DNNs to correctly classify some defender-specified `hard' samples. We provided the theoretical analyses of our domain watermark and its corresponding ownership verification. We also verified its effectiveness on benchmark datasets. As the first non-backdoor-based method, our method can provide new angles and understanding to the design of dataset ownership verification to facilitate trustworthy dataset sharing.

\section*{Acknowledgments}
Junfeng Guo and Heng Huang were partially supported by NSF IIS 1838627, 1837956, 1956002, 2211492, CNS 2213701, CCF 2217003, and DBI 2225775. Cong Liu was supported by the National Science Foundation under Grants CNS Career 2230968, CPS 2230969, CNS 2300525, CNS 2343653, and CNS 2312397.

\bibliographystyle{unsrt}
\bibliography{reference}

\begin{thebibliography}{10}

\bibitem{krizhevsky2009learning}
Alex Krizhevsky.
\newblock Learning multiple layers of features from tiny images.
\newblock Technical report, 2009.

\bibitem{deng2009imagenet}
Jia Deng, Wei Dong, Richard Socher, Li-Jia Li, Kai Li, and Li~Fei-Fei.
\newblock Imagenet: A large-scale hierarchical image database.
\newblock In {\em CVPR}, 2009.

\bibitem{li2023black}
Yiming Li, Mingyan Zhu, Xue Yang, Yong Jiang, Tao Wei, and Shu-Tao Xia.
\newblock Black-box dataset ownership verification via backdoor watermarking.
\newblock {\em IEEE Transactions on Information Forensics and Security}, 2023.

\bibitem{li2022untargeted}
Yiming Li, Yang Bai, Yong Jiang, Yong Yang, Shu-Tao Xia, and Bo~Li.
\newblock Untargeted backdoor watermark: Towards harmless and stealthy dataset copyright protection.
\newblock In {\em NeurIPS}, 2022.

\bibitem{tang2023did}
Ruixiang Tang, Qizhang Feng, Ninghao Liu, Fan Yang, and Xia Hu.
\newblock Did you train on my dataset? towards public dataset protection with clean-label backdoor watermarking.
\newblock {\em ACM SIGKDD Explorations Newsletter}, 2023.

\bibitem{wang2021non}
Lixu Wang, Shichao Xu, Ruiqi Xu, Xiao Wang, and Qi~Zhu.
\newblock Non-transferable learning: A new approach for model ownership verification and applicability authorization.
\newblock In {\em ICLR}, 2022.

\bibitem{boneh2001identity}
Dan Boneh and Matt Franklin.
\newblock Identity-based encryption from the weil pairing.
\newblock In {\em CRYPTO}, 2001.

\bibitem{martins2017survey}
Paulo Martins, Leonel Sousa, and Artur Mariano.
\newblock A survey on fully homomorphic encryption: An engineering perspective.
\newblock {\em ACM Computing Surveys}, 2017.

\bibitem{deng2020identity}
Hua Deng, Zheng Qin, Qianhong Wu, Zhenyu Guan, Robert~H Deng, Yujue Wang, and Yunya Zhou.
\newblock Identity-based encryption transformation for flexible sharing of encrypted data in public cloud.
\newblock {\em IEEE Transactions on Information Forensics and Security}, 2020.

\bibitem{zhu2019deep}
Ligeng Zhu, Zhijian Liu, and Song Han.
\newblock Deep leakage from gradients.
\newblock In {\em NeurIPS}, 2019.

\bibitem{shokri2017membership}
Reza Shokri, Marco Stronati, Congzheng Song, and Vitaly Shmatikov.
\newblock Membership inference attacks against machine learning models.
\newblock In {\em IEEE S\&P}, 2017.

\bibitem{bai2022multinomial}
Jiawang Bai, Yiming Li, Jiawei Li, Xue Yang, Yong Jiang, and Shu-Tao Xia.
\newblock Multinomial random forest.
\newblock {\em Pattern Recognition}, 2022.

\bibitem{haddad2020joint}
Sahar Haddad, Gouenou Coatrieux, Alexandre Moreau-Gaudry, and Michel Cozic.
\newblock Joint watermarking-encryption-jpeg-ls for medical image reliability control in encrypted and compressed domains.
\newblock {\em IEEE Transactions on Information Forensics and Security}, 2020.

\bibitem{wang2021faketagger}
Run Wang, Felix Juefei-Xu, Meng Luo, Yang Liu, and Lina Wang.
\newblock Faketagger: Robust safeguards against deepfake dissemination via provenance tracking.
\newblock In {\em ACM MM}, 2021.

\bibitem{guan2022deepmih}
Zhenyu Guan, Junpeng Jing, Xin Deng, Mai Xu, Lai Jiang, Zhou Zhang, and Yipeng Li.
\newblock Deepmih: Deep invertible network for multiple image hiding.
\newblock {\em IEEE Transactions on Pattern Analysis and Machine Intelligence}, 2022.

\bibitem{cheng2021mid}
Yushi Cheng, Xiaoyu Ji, Lixu Wang, Qi~Pang, Yi-Chao Chen, and Wenyuan Xu.
\newblock $\{$mID$\}$: Tracing screen photos via $\{$Moir{\'e}$\}$ patterns.
\newblock In {\em USENIX Security}, pages 2969--2986, 2021.

\bibitem{gu2019badnets}
Tianyu Gu, Kang Liu, Brendan Dolan-Gavitt, and Siddharth Garg.
\newblock Badnets: Evaluating backdooring attacks on deep neural networks.
\newblock {\em IEEE Access}, 2019.

\bibitem{li2022backdoor}
Yiming Li, Yong Jiang, Zhifeng Li, and Shu-Tao Xia.
\newblock Backdoor learning: A survey.
\newblock {\em IEEE Transactions on Neural Networks and Learning Systems}, 2022.

\bibitem{hayase2023few}
Jonathan Hayase and Sewoong Oh.
\newblock Few-shot backdoor attacks via neural tangent kernels.
\newblock In {\em ICLR}, 2023.

\bibitem{nguyen2020input}
Tuan~Anh Nguyen and Anh Tran.
\newblock Input-aware dynamic backdoor attack.
\newblock In {\em NeurIPS}, 2020.

\bibitem{li2022few}
Yiming Li, Haoxiang Zhong, Xingjun Ma, Yong Jiang, and Shu-Tao Xia.
\newblock Few-shot backdoor attacks on visual object tracking.
\newblock In {\em ICLR}, 2022.

\bibitem{dong2023one}
Jianshuo Dong, Han Qiu, Yiming Li, Tianwei Zhang, Yuanjie Li, Zeqi Lai, Chao Zhang, and Shu-Tao Xia.
\newblock One-bit flip is all you need: When bit-flip attack meets model training.
\newblock In {\em ICCV}, 2023.

\bibitem{qi2023revisiting}
Xiangyu Qi, Tinghao Xie, Yiming Li, Saeed Mahloujifar, and Prateek Mittal.
\newblock Revisiting the assumption of latent separability for backdoor defenses.
\newblock In {\em ICLR}, 2023.

\bibitem{guo2023masterkey}
Hanqing Guo, Xun Chen, Junfeng Guo, Li~Xiao, and Qiben Yan.
\newblock Masterkey: Practical backdoor attack against speaker verification systems.
\newblock In {\em MobiCom}, 2023.

\bibitem{chen2017targeted}
Xinyun Chen, Chang Liu, Bo~Li, Kimberly Lu, and Dawn Song.
\newblock Targeted backdoor attacks on deep learning systems using data poisoning.
\newblock {\em arXiv preprint arXiv:1712.05526}, 2017.

\bibitem{nguyen2021wanet}
Anh Nguyen and Anh Tran.
\newblock Wanet--imperceptible warping-based backdoor attack.
\newblock In {\em ICLR}, 2021.

\bibitem{gao2023not}
Yinghua Gao, Yiming Li, Linghui Zhu, Dongxian Wu, Yong Jiang, and Shu-Tao Xia.
\newblock Not all samples are born equal: Towards effective clean-label backdoor attacks.
\newblock {\em Pattern Recognition}, 2023.

\bibitem{sleeperagent}
Hossein Souri, Liam~H Fowl, Rama Chellappa, Micah Goldblum, and Tom Goldstein.
\newblock Sleeper agent: Scalable hidden trigger backdoors for neural networks trained from scratch.
\newblock In {\em NeurIPS}, 2022.

\bibitem{rivest1992md5}
Ronald Rivest.
\newblock The md5 message-digest algorithm.
\newblock Technical report, 1992.

\bibitem{hsu1999hidden}
Chiou-Ting Hsu and Ja-Ling Wu.
\newblock Hidden digital watermarks in images.
\newblock {\em IEEE Transactions on image processing}, 1999.

\bibitem{hsieh2001hiding}
Ming-Shing Hsieh, Din-Chang Tseng, and Yong-Huai Huang.
\newblock Hiding digital watermarks using multiresolution wavelet transform.
\newblock {\em IEEE Transactions on industrial electronics}, 2001.

\bibitem{guo2018halftone}
Yuanfang Guo, Oscar~C Au, Rui Wang, Lu~Fang, and Xiaochun Cao.
\newblock Halftone image watermarking by content aware double-sided embedding error diffusion.
\newblock {\em IEEE Transactions on Image Processing}, 2018.

\bibitem{xiong2020adgan}
Zuobin Xiong, Zhipeng Cai, Qilong Han, Arwa Alrawais, and Wei Li.
\newblock Adgan: Protect your location privacy in camera data of auto-driving vehicles.
\newblock {\em IEEE Transactions on Industrial Informatics}, 17(9):6200--6210, 2020.

\bibitem{li2021visual}
Yiming Li, Peidong Liu, Yong Jiang, and Shu-Tao Xia.
\newblock Visual privacy protection via mapping distortion.
\newblock In {\em ICASSP}, 2021.

\bibitem{xu2021audio}
Honghui Xu, Zhipeng Cai, Daniel Takabi, and Wei Li.
\newblock Audio-visual autoencoding for privacy-preserving video streaming.
\newblock {\em IEEE Internet of Things Journal}, 2021.

\bibitem{zhu2021fine}
Linghui Zhu, Xinyi Liu, Yiming Li, Xue Yang, Shu-Tao Xia, and Rongxing Lu.
\newblock A fine-grained differentially private federated learning against leakage from gradients.
\newblock {\em IEEE Internet of Things Journal}, 2021.

\bibitem{domain_MI}
Haiteng Zhao, Chang Ma, Qinyu Chen, and Zhi-Hong Deng.
\newblock Domain adaptation via maximizing surrogate mutual information.
\newblock In {\em IJCAI}, 2022.

\bibitem{cheng2020club}
Pengyu Cheng, Weituo Hao, Shuyang Dai, Jiachang Liu, Zhe Gan, and Lawrence Carin.
\newblock Club: A contrastive log-ratio upper bound of mutual information.
\newblock In {\em ICML}, 2020.

\bibitem{pac}
David~A McAllester.
\newblock Some pac-bayesian theorems.
\newblock In {\em COLT}, 1998.

\bibitem{schmetterer2012introduction}
Leopold Schmetterer.
\newblock {\em Introduction to mathematical statistics}, volume 202.
\newblock Springer Science \& Business Media, 2012.

\bibitem{le2015tiny}
Ya~Le and Xuan Yang.
\newblock Tiny imagenet visual recognition challenge.
\newblock {\em CS 231N}, 2015.

\bibitem{simonyan2014very}
Karen Simonyan and Andrew Zisserman.
\newblock Very deep convolutional networks for large-scale image recognition.
\newblock {\em ICLR}, 2014.

\bibitem{he2016deep}
Kaiming He, Xiangyu Zhang, Shaoqing Ren, and Jian Sun.
\newblock Deep residual learning for image recognition.
\newblock In {\em CVPR}, 2016.

\bibitem{coates2011analysis}
Adam Coates, Andrew Ng, and Honglak Lee.
\newblock An analysis of single-layer networks in unsupervised feature learning.
\newblock In {\em AISTATS}, 2011.

\bibitem{li2023backdoorbox}
Yiming Li, Mengxi Ya, Yang Bai, Yong Jiang, and Shu-Tao Xia.
\newblock {BackdoorBox}: A python toolbox for backdoor learning.
\newblock In {\em ICLR Workshop}, 2023.

\bibitem{liu2017neural}
Yuntao Liu, Yang Xie, and Ankur Srivastava.
\newblock Neural trojans.
\newblock In {\em ICCD}, 2017.

\bibitem{wu2021adversarial}
Dongxian Wu and Yisen Wang.
\newblock Adversarial neuron pruning purifies backdoored deep models.
\newblock In {\em NeurIPS}, 2021.

\bibitem{van2008visualizing}
Laurens Van~der Maaten and Geoffrey Hinton.
\newblock Visualizing data using t-sne.
\newblock {\em Journal of machine learning research}, 2008.

\bibitem{Wang2021learning}
Zijian Wang, Yadan Luo, Ruihong Qiu, Zi~Huang, and Mahsa Baktashmotlagh.
\newblock Learning to diversify for single domain generalization.
\newblock In {\em ICCV}, 2021.

\bibitem{liu2022deja}
Chenxi Liu, Lixu Wang, Lingjuan Lyu, Chen Sun, Xiao Wang, and Qi~Zhu.
\newblock Deja vu: Continual model generalization for unseen domains.
\newblock In {\em ICLR}, 2023.

\bibitem{huang2021unlearnable}
Hanxun Huang, Xingjun Ma, Sarah~Monazam Erfani, James Bailey, and Yisen Wang.
\newblock Unlearnable examples: Making personal data unexploitable.
\newblock {\em ICLR}, 2021.

\bibitem{dong2022neural}
Xin Dong, Junfeng Guo, Ang Li, Wei-Te Ting, Cong Liu, and HT~Kung.
\newblock Neural mean discrepancy for efficient out-of-distribution detection.
\newblock In {\em CVPR}, 2022.

\bibitem{bottou2010large}
L{\'e}on Bottou.
\newblock Large-scale machine learning with stochastic gradient descent.
\newblock In {\em ICCS}, 2010.

\bibitem{germain2016new}
Pascal Germain, Amaury Habrard, Fran{\c{c}}ois Laviolette, and Emilie Morvant.
\newblock A new pac-bayesian perspective on domain adaptation.
\newblock In {\em ICML}, 2016.

\bibitem{lacasse2006pac}
Alexandre Lacasse, Fran{\c{c}}ois Laviolette, Mario Marchand, Pascal Germain, and Nicolas Usunier.
\newblock Pac-bayes bounds for the risk of the majority vote and the variance of the gibbs classifier.
\newblock {\em NeurIPS}, 2006.

\bibitem{germain2015risk}
Pascal Germain, Alexandre Lacasse, Francois Laviolette, Mario March, and Jean-Francis Roy.
\newblock Risk bounds for the majority vote: From a pac-bayesian analysis to a learning algorithm.
\newblock {\em Journal of Machine Learning Research}, 2015.

\bibitem{xu2021weak}
Shichao Xu, Lixu Wang, Yixuan Wang, and Qi~Zhu.
\newblock Weak adaptation learning: Addressing cross-domain data insufficiency with weak annotator.
\newblock In {\em ICCV}, 2021.

\bibitem{van2014renyi}
Tim Van~Erven and Peter Harremos.
\newblock R{\'e}nyi divergence and kullback-leibler divergence.
\newblock {\em IEEE Transactions on Information Theory}, 2014.

\bibitem{souri2022sleeper}
Hossein Souri, Micah Goldblum, Liam Fowl, Rama Chellappa, and Tom Goldstein.
\newblock Sleeper agent: Scalable hidden trigger backdoors for neural networks trained from scratch.
\newblock In {\em NeurIPS}, 2022.

\bibitem{geiping2021witches}
Jonas Geiping, Liam Fowl, W~Ronny Huang, Wojciech Czaja, Gavin Taylor, Michael Moeller, and Tom Goldstein.
\newblock Witches' brew: Industrial scale data poisoning via gradient matching.
\newblock In {\em ICLR}, 2021.

\bibitem{guo2020practical}
Junfeng Guo and Cong Liu.
\newblock Practical poisoning attacks on neural networks.
\newblock In {\em ECCV}, 2020.

\bibitem{li2021backdoor}
Yiming Li, Tongqing Zhai, Yong Jiang, Zhifeng Li, and Shu-Tao Xia.
\newblock Backdoor attack in the physical world.
\newblock In {\em ICLR Workshop}, 2021.

\bibitem{guo2023scaleup}
Junfeng Guo, Yiming Li, Xun Chen, Hanqing Guo, Lichao Sun, and Cong Liu.
\newblock {SCALE}-{UP}: An efficient black-box input-level backdoor detection via analyzing scaled prediction consistency.
\newblock In {\em ICLR}, 2023.

\bibitem{guo2022aeva}
Junfeng Guo, Ang Li, and Cong Liu.
\newblock Aeva: Black-box backdoor detection using adversarial extreme value analysis.
\newblock In {\em ICLR}, 2022.

\bibitem{guo2023policycleanse}
Junfeng Guo, Ang Li, Lixu Wang, and Cong Liu.
\newblock Policycleanse: Backdoor detection and mitigation for competitive reinforcement learning.
\newblock In {\em ICCV}, 2023.

\bibitem{wang2019neural}
Bolun Wang, Yuanshun Yao, Shawn Shan, Huiying Li, Bimal Viswanath, Haitao Zheng, and Ben~Y Zhao.
\newblock Neural cleanse: Identifying and mitigating backdoor attacks in neural networks.
\newblock In {\em IEEE S\&P}, 2019.

\bibitem{hayase2021spectre}
Jonathan Hayase, Weihao Kong, Raghav Somani, and Sewoong Oh.
\newblock Spectre: Defending against backdoor attacks using robust statistics.
\newblock In {\em ICML}, 2021.

\bibitem{huang2022backdoor}
Kunzhe Huang, Yiming Li, Baoyuan Wu, Zhan Qin, and Kui Ren.
\newblock Backdoor defense via decoupling the training process.
\newblock In {\em ICLR}, 2022.

\bibitem{wang2023towards}
Shuaiqi Wang, Jonathan Hayase, Giulia Fanti, and Sewoong Oh.
\newblock Towards a defense against federated backdoor attacks under continuous training.
\newblock {\em Transactions on Machine Learning Research}, 2023.

\end{thebibliography}

\newpage

\appendix

\doparttoc 
\faketableofcontents 

\part{} 
\parttoc 
\parttoc
\section*{Appendix}
\part{} 
\parttoc 

\setcounter{lemma}{0}
\setcounter{theorem}{0}



\newpage
\begin{figure}[!t]
    \centering   \includegraphics[width=1\textwidth]{./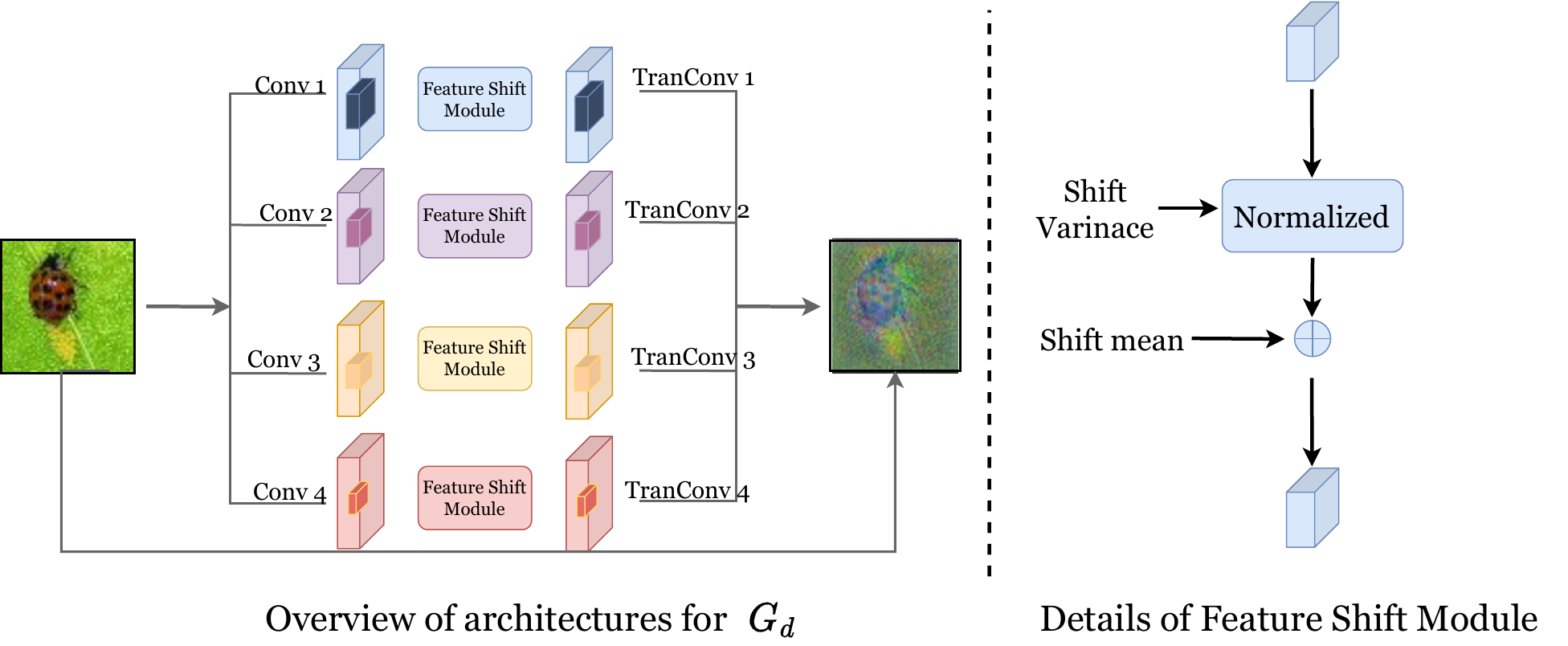}
    \vspace{-1.5em}
    \caption{The architecture of $G_{d}$.}
    \label{fig:gan}
    \vspace{-1em}
\end{figure}

\section{Technical Details for Generating Hardly-Generalized Domain}
\label{app:HardDomain_details}

This process is mainly motivated by~\cite{Wang2021learning,liu2022deja}, which leveraged a transformation module with different convolution transformations to minimize the mutual information ($I$) between features from the source dataset (\ie, $\bm{z}$) and data from the target domain (\ie, $\bm{\hat{z}}$). Our work is also partially inspired by previous work on generating unlearnable samples~\cite{huang2021unlearnable}, which crafted effective unlearnable samples by performing the bi-level optimization within each iteration.

\subsection{The Implementation of Transformation Module}

We follow previous work~\cite{dong2022neural,domain_MI,Wang2021learning} to implement the transformation module for generating samples from a different domain (as shown in \cref{fig:gan}). Specifically, we design the transformation module as an ensemble of multiple (\ie, $4$) convolution operations. Each convolution operation contains a convolution layer $\texttt{Conv}$, a feature shift module, and a corresponding transposed convolution layer $\texttt{TranConv}$.
The detailed parameters for each convolution layer $\texttt{Conv}_{i}$ are detailed in \cref{table:GAN_1}. Following each convolution layer  $\texttt{Conv}_{i}$, we add a feature shift module to   enhance the diversity of the generated samples. Specifically, each feature shift module contains two learnable parameters $\mu_{i} $ and $\sigma_{i}$ as mean shift and variance shift, following:
\begin{equation}
    \sigma_{i} \cdot \frac{\texttt{Conv}_{i}(\bm{x})-\mu}{\sigma}+\mu_{i},
\end{equation}
where $\mu$ and $\sigma$ represent the mean and covariance value for $\texttt{Conv}_{i}(\bm{x})$. Notably, $\mu$ and $\sigma$ are not learnable parameters. Moreover, the parameters $\mu_{i}$ and $\sigma_{i}$ has the same dimension as the output of $\texttt{Conv}_{i}(\bm{x})$. 
After that, we use a transposed convolution layer $\texttt{TranConv}$ to turn the feature maps generated by the above operations into a real instance, which has the same dimension as $\bm{x}$.

Putting all above, we generate the hard-generalized domain samples $\bm{\hat{x}}$ following: 
\begin{equation}
    \bm{\hat{x}} = \frac{1}{\sum w_{i}}\sum_{i}w_{i} \cdot \text{tahn}(\texttt{TranConv}( \sigma_{i} \cdot \frac{\texttt{Conv}_{i}(\bm{x})-\mu}{\sigma}+\mu_{i})),
\end{equation}
where $\text{tahn}$ represents the tahn activation function. $w_{i}$ is a scalar and weights the contribution of each activated instance produced by $\texttt{TransposedConv}$ to $\bm{\hat{x}}$. $w_{i}$ is randomly sampled from normal distribution $w_{i} \sim N(0,1)$.  Notably, for each input $\bm{x}$, we first up-sample it to $224\times224$ size and down-sample produced $\bm{\hat{x}}$ to the original size for $\bm{x}$.  
\subsection{The Optimization Process}
During the optimization process of ~\cref{eq:domain_generate}, we first initialized a surrogate model $f(\cdot;\bm{w})$ and a benign dataset $\mathcal{D}$. Then during each iteration for solving the bi-level optimization \cref{eq:domain_generate}, we first minimize the $I(\bm{z};\bm{\hat{z}})$ and $\mathcal{L}_c$ by optimizing the parameters of our proposed transformation module:

\begin{equation}
    \min_{\bm{\theta}} \mathbb{E}_{p(\bm{z},\bm{\hat{z}})} ~\left [I  ( \bm{z}(\bm{w^{*}});\bm{\hat{z}}(\bm{\theta},\bm{w^{*}}))  + \lambda_{1} \mathcal{L}_{c}( \bm{z}(\bm{w^{*}}),\bm{\hat{z}}(\bm{\theta},\bm{w^{*}}))\right].
\end{equation}
After that, we maximize  $I(\bm{z};\bm{\hat{z}})$ and minimize the training loss by optimizing the parameters $\bm{w}$: 
\begin{equation}
        \min_{\bm{w}} \left[ \mathbb{E}_{(\bm{x},y)\sim\mathcal{D}}~[\mathcal{L}(f(G_d(\bm{x};\bm{\theta});\bm{w}),y) + \mathcal{L}(f(\bm{x};\bm{w}),y)] -\lambda_{2} \mathbb{E}_{p(\bm{z},\bm{\hat{z}})}[I ( \bm{z}(\bm{w});\bm{\hat{z}})\right].
\end{equation}
Since $I(\bm{z};\bm{\hat{z}})$ is intractable, we propose to optimize its upper bound instead:

\begin{align}
    \label{eq:MI_2}
    I(\bm{z};\bm{\hat{z}})=\mathbb{E}_{p(\bm{z},\bm{\hat{z}})}\left[\text{log}~\frac{p(\bm{\hat{z}}|\bm{z})} {p(\bm{\hat{z}})}\right] \leq \mathbb{E}_{p(\bm{z},\bm{\hat{z}})} [\text{log}~p(\bm{\hat{z}}|\bm{z})]-\mathbb{E}_{p(\bm{z})p(\bm{\hat{z}})}[\text{log}~p(\bm{\hat{z}}|\bm{z})].
\end{align}

Since the conditional distribution $p(\bm{\hat{z}}|\bm{z})$ is also intractable thus the upper bound of $I(\bm{z};\bm{\hat{z}})$ can't be optimized, we follow previous work to adopt a variational distribution $q(\bm{\hat{z}}|\bm{z})$ to approximate the upper bound of  $I(\bm{z};\bm{\hat{z}})$:

\begin{align}
    \label{eq:MI_3}
    I(\bm{z};\bm{\hat{z}}) \leq \frac{1}{N}\sum_{i=1}^{N} [\text{log} ~q(\bm{\hat{z}_{i}}|\bm{z_{i}})-\frac{1}{N}\sum_{j=1}^{N}\text{log} ~q(\bm{\hat{z}_{j}}|\bm{z_{i}})],
\end{align}
where $q(\bm{\hat{z}}|\bm{z})$ is obtained by employing the backbone neural network to approximate.

We optimize the above bi-level optimization \cref{eq:domain_generate} with 100 iterations.  We set the learning rate as
 0.005 for optimizing the parameters of the proposed transformation module and 0.001 for parameters for the backbone model $f(\cdot)$ following~\cite{Wang2021learning}. The batch size is 64. For both the transformation module and the backbone model $f(\cdot)$, we use SGD~\cite{bottou2010large} as the optimizer with Nesterov momentum and weight
decay rate of 0.0005. We use ResNet-18 as the backbone model for extracting $\bm{z}$ and $\bm{\hat{z}}$ throughout the paper. We introduce $\lambda_{1}$ and $\lambda_2$  for balancing each optimization objective. Following the implementation of \cite{Wang2021learning}, we set $\lambda_{1}$ and $\lambda_2$ as 0.1 and 1.0.   
\raggedbottom

\begin{table}[!t]
    \centering
    \vspace{1em}
    \caption{The configuration for each convolution layer.}
    \vspace{-0.5em}
    \begin{tabular}{cccc}
    \toprule
    Layer & Kernel Size & Input Channel & Output Channel  \\ \midrule
    $\texttt{Conv}_{1}$   &  5x5 & 3 & 3        \\
     $\texttt{Conv}_{2}$ &  9x9 & 3 & 3 \\
    $\texttt{Conv}_{3}$ &  13x13 & 3 & 3
     \\
      $\texttt{Conv}_{4}$ &  17x17 & 3 & 3
    \\ \bottomrule
    \end{tabular}
    \label{table:GAN_1}
\end{table}

\section{The Proof for Theorem 1}
\label{app:thm1_proof}

\begin{theorem}[Data Quantity Impact]\label{thm:dq} Suppose in PAC Bayesian~\cite{pac}, for a target domain $\mathcal{T}$ and a source domain $\mathcal{S}$, any set of voters (candidate models) $\mathcal{H}$, any prior $\pi$ over $\mathcal{H}$ before any training, any $\xi \in (0, 1]$, any $c>0$, with a probability at least $1-\xi$ over the choices of $S \sim \mathcal{S}^{n_s}$ and $T \sim \mathcal{T}^{n_t}_{\mathcal{X}}$, for the posterior $f$ over $\mathcal{H}$ after the joint training on $S$ and $T$, we have
\begin{equation}
\begin{aligned}
\mathcal{R}_{\mathcal{T}}\left(f\right) & \leq \frac{c}{2(1-e^{-c})} \widehat{\mathcal{R}}_T(f)+\frac{c}{1-e^{-c}} \beta_{\infty}(\mathcal{T}\|\mathcal{S})\widehat{\mathcal{R}}_S(f)+ \Omega\\
& +\frac{1}{1-e^{-c}}\left(\frac{1}{n_t}+\frac{\beta_{\infty}(\mathcal{T}\|\mathcal{S})}{n_s}\right)\left(2 \mathrm{KL}(f \| \pi)+\ln \frac{2}{\xi}\right),
\end{aligned}
\end{equation}
where $\widehat{\mathcal{R}}_T(f)$ and $\widehat{\mathcal{R}}_S(f)$ are the target and source empirical risks measured over target and source datasets $T$ and $S$, respectively. $\Omega$ is a constant and $\mathrm{KL}(\cdot)$ is the Kullback–Leibler divergence. $\beta_{\infty}(\mathcal{T}\|\mathcal{S})$ is a measurement of discrepancy between $\mathcal{T}$ and $\mathcal{S}$ defined as
\begin{equation}
\beta_{\infty}(\mathcal{T} \| \mathcal{S})=\sup _{({\bm x}, y) \in \mathrm{SUPP}(\mathcal{S})}\left(\frac{\mathcal{P}_{({\bm x}, y) \in \mathcal{T}}}{\mathcal{P}_{({\bm x}, y) \in \mathcal{S}}}\right) \geq 1,
\end{equation}
where $\mathrm{SUPP}(\mathcal{S})$ denotes the support of $\mathcal{S}$. When $\mathcal{S}$ and $\mathcal{T}$ are identical, $\beta_{\infty}(\mathcal{T} \| \mathcal{S}) = 1$. 
\end{theorem}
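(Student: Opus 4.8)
The plan is to establish this PAC-Bayesian domain adaptation bound by adapting the standard PAC-Bayesian change-of-measure machinery to the two-sample (source/target) setting. The quantity $\beta_{\infty}(\mathcal{T}\|\mathcal{S})$ is precisely the $L_\infty$ density ratio between the two domains, so the core idea is that any expectation under $\mathcal{T}$ can be controlled by an expectation under $\mathcal{S}$ at a multiplicative cost of at most $\beta_{\infty}(\mathcal{T}\|\mathcal{S})$; this is what lets the source empirical risk $\widehat{\mathcal{R}}_S(f)$ enter the bound on the target true risk $\mathcal{R}_{\mathcal{T}}(f)$. First I would decompose the target risk $\mathcal{R}_{\mathcal{T}}(f)$ into a part that can be estimated from the labeled target sample $T$ (contributing the $\widehat{\mathcal{R}}_T(f)$ term with coefficient $\tfrac{c}{2(1-e^{-c})}$) and a part that must be transferred from the source via the density ratio (contributing the $\beta_{\infty}\widehat{\mathcal{R}}_S(f)$ term). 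The constant $\Omega$ absorbs the irreducible discrepancy between the domains that cannot be closed by either sample.

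The technical heart is a two-step argument. The first step is the standard PAC-Bayesian change of measure (the Donsker–Varadhan / Catoni variational inequality): for any measurable function and the posterior $f$ versus prior $\pi$, one has a bound of the form $\mathbb{E}_{f}[\,\cdot\,] \le \tfrac{1}{\lambda}\big(\mathrm{KL}(f\|\pi) + \ln \mathbb{E}_\pi[e^{\lambda(\cdot)}]\big)$, and Catoni's specific choice of the convex "small-$c$" loss transfer function $\Phi_c(x) = \tfrac{1}{1-e^{-c}}(1 - e^{-cx})$ produces the characteristic $\tfrac{c}{1-e^{-c}}$ and $\tfrac{1}{1-e^{-c}}$ factors appearing throughout Eq.~(\ref{theorem11}). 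The second step is concentration: I would apply this variational bound separately to the target sample (yielding the $\tfrac{1}{n_t}$ complexity term) and to the source sample (yielding the $\tfrac{\beta_{\infty}(\mathcal{T}\|\mathcal{S})}{n_s}$ term, where the density ratio again inflates the source contribution). A union bound over the two applications, each at confidence $1-\xi/2$, explains the $\ln\tfrac{2}{\xi}$ and the factor $2$ on the $\mathrm{KL}$ divergence after collecting terms.

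The step I expect to be the main obstacle is correctly tracking the density-ratio reweighting through Catoni's exponential-moment calculation so that $\beta_{\infty}(\mathcal{T}\|\mathcal{S})$ appears in exactly the right places — multiplying $\widehat{\mathcal{R}}_S(f)$ in the leading term and multiplying $1/n_s$ in the complexity term — without leaking extra factors of $c$ or spurious constants into $\Omega$. Concretely, when transferring the source moment-generating function $\mathbb{E}_\pi\mathbb{E}_{S}[e^{\lambda \ell}]$ to a statement about $\mathcal{T}$, the bound $\tfrac{d\mathcal{T}}{d\mathcal{S}} \le \beta_{\infty}(\mathcal{T}\|\mathcal{S})$ pointwise on $\mathrm{SUPP}(\mathcal{S})$ must be applied to the per-example loss before the exponential is taken, and the interplay between this pointwise inequality and Jensen's inequality for the convex $\Phi_c$ is delicate. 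I would lean on the fact that $\beta_{\infty}$ is defined as a supremum over the support of $\mathcal{S}$, so the reweighting is a genuine pointwise domination rather than an average one, which is what makes the transfer clean. The remaining bookkeeping — combining the two concentration events, absorbing the domain-mismatch remainder into $\Omega$, and verifying the coefficient $\tfrac{c}{2(1-e^{-c})}$ on the target empirical term — is routine once the reweighting is pinned down, and the reduction to $\beta_{\infty}=1$ when $\mathcal{S}=\mathcal{T}$ serves as a useful consistency check.
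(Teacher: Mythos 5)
Your outline heads in a reasonable direction, but it diverges from the paper in a way that leaves two genuine gaps. First, the paper does not re-derive the PAC-Bayesian bound at all: it imports Theorem~6 of Germain \etal~\cite{germain2016new} verbatim, which bounds $\mathcal{R}_\mathcal{T}(f)$ by $\frac{c}{2(1-e^{-c})}\widehat{\mathrm{d}}_T(f)+\frac{c}{1-e^{-c}}\beta_\infty \widehat{\mathrm{e}}_S(f)+\cdots$, and then converts the empirical \emph{voters' disagreement} $\widehat{\mathrm{d}}_T$ and \emph{joint error} $\widehat{\mathrm{e}}_S$ into Gibbs risks via the identity $\mathcal{R}_\mathcal{A}(f)=\frac{1}{2}\mathrm{d}_\mathcal{A}(f)+\mathrm{e}_\mathcal{A}(f)$. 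This decomposition is the structural content you are missing: the target sample is drawn from the \emph{marginal} $\mathcal{T}_{\mathcal{X}}^{n_t}$ and is therefore unlabeled, so your plan to estimate "a part of the risk" from a "labeled target sample $T$" cannot be executed as stated. What is estimated on $T$ is $\mathrm{d}_\mathcal{T}(f)=\mathbb{E}_{\bm{x}}\mathbb{E}_{h,h'}\mathbb{I}[h(\bm{x})\neq h'(\bm{x})]$, which requires no labels; all label information is carried by the source joint error, reweighted by $\beta_\infty$. The coefficient $\frac{c}{2(1-e^{-c})}$ on the target term is the Catoni factor times the weight $\frac{1}{2}$ with which the disagreement enters the risk identity --- it does not emerge from a generic split of $\mathcal{R}_\mathcal{T}$ into target-estimable and source-transferred pieces, so your bookkeeping would not recover the stated constants.

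Second, the claim $\beta_\infty(\mathcal{T}\|\mathcal{S})\geq 1$, with equality when $\mathcal{S}=\mathcal{T}$, is part of the theorem and your proposal only invokes it as a consistency check rather than proving it. The paper establishes it by writing $\beta_q(\mathcal{T}\|\mathcal{S})=2^{\frac{q-1}{q}D_q(\mathcal{T}\|\mathcal{S})}$ in terms of the R\'enyi divergence of order $q$ and invoking positivity of $D_q$~\cite{van2014renyi}, then letting $q\to\infty$; note that a naive "integrate the density ratio" argument is delicate when $\mathcal{T}$ places mass outside $\mathrm{SUPP}(\mathcal{S})$, since the supremum defining $\beta_\infty$ is taken only over $\mathrm{SUPP}(\mathcal{S})$. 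Your Catoni-plus-importance-weighting sketch is broadly the right picture of how the \emph{cited} theorem is proven upstream, but as a proof of the statement in this paper it is missing the disagreement/joint-error decomposition that produces the exact coefficients, and the $\beta_\infty\geq 1$ argument entirely.
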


\begin{proof}
Theorem 6 in Germain \etal's work~\cite{germain2016new} demonstrates that \emph{suppose in PAC Bayesian~\cite{pac}, for a target domain $\mathcal{T}$ and a source domain $\mathcal{S}$, any set of voters (candidate models) $\mathcal{H}$, any prior $\pi$ over $\mathcal{H}$ before any training, any $\xi \in (0, 1]$, any $c>0$, with a probability at least $1-\xi$ over the choices of $S \sim \mathcal{S}^{n_s}$ and $T \sim \mathcal{T}^{n_t}_{\mathcal{X}}$, for the posterior $f$ over $\mathcal{H}$ after the joint training on $S$ and $T$:}
\begin{equation}
\begin{aligned}
\mathcal{R}_{\mathcal{T}}\left(f\right) & \leq \frac{c}{2(1-e^{-c})} \widehat{\mathrm{d}}_T(f)+\frac{c}{1-e^{-c}} \beta_{\infty}(\mathcal{T}\|\mathcal{S})\widehat{\mathrm{e}}_S(f)+ \Omega\\
& +\frac{1}{1-e^{-c}}\left(\frac{1}{n_t}+\frac{\beta_{\infty}(\mathcal{T}\|\mathcal{S})}{n_s}\right)\left(2 \mathrm{KL}(f \| \pi)+\ln \frac{2}{\xi}\right),
\label{proof_th6}
\end{aligned}
\end{equation}
\emph{where $\mathcal{R}_\mathcal{T}(f)$ denotes the expected Gibbs risk of voter $f$ over the target domain. $\widehat{\mathrm{d}}_T(f)$ and $\widehat{\mathrm{e}}_S(f)$ are the empirical estimation of the target voters' disagreement and the source joint error, measured over target and source datasets $T$ and $S$, respectively. $\Omega$ is a constant and $\mathrm{KL}(\cdot)$ is the Kullback–Leibler divergence. $\beta_{\infty}(\mathcal{T}\|\mathcal{S})$ measures the discrepancy between $\mathcal{T}$ and $\mathcal{S}$, defined as:}
\begin{equation}
\beta_{\infty}(\mathcal{T} \| \mathcal{S})=\sup _{({\bm x}, y) \in \mathrm{SUPP}(\mathcal{S})}\left(\frac{\mathcal{P}_{({\bm x}, y) \in \mathcal{T}}}{\mathcal{P}_{({\bm x}, y) \in \mathcal{S}}}\right),
\end{equation}
\emph{where $\mathrm{SUPP}(\mathcal{S})$ denotes the support of $\mathcal{S}$.}

In the following proof, in particular, the Gibbs risk $\mathcal{R}_\mathcal{A}(f)$, the voters' disagreement $\mathrm{d}_\mathcal{A}(f)$, and the joint error $\mathrm{e}_\mathcal{A}(f)$ of a certain domain $\mathcal{A}$ are defined as follows.
\begin{equation}
\mathcal{R}_\mathcal{A}(f)=\underset{({\bm x}, y) \sim \mathcal{A}}{\mathbb{E}} \,\underset{h \sim f}{\mathbb{E}} \mathbb{I}[h({\bm x}) \neq y],
\end{equation}\\
\begin{equation}
\mathrm{d}_{\mathcal{A}}(f)=\underset{{\bm x} \sim \mathcal{A}_{\mathcal{X}}}{\mathbb{E}} \, \underset{h \sim f}{\mathbb{E}}\, \underset{h^{\prime} \sim f}{\mathbb{E}} \mathbb{I}\left[h({\bm x}) \neq h^{\prime}({\bm x})\right],
\end{equation}\\\
\begin{equation}
\mathrm{e}_{\mathcal{A}}(f)=\underset{({\bm x}, y) \sim \mathcal{A}}{\mathbb{E}} \, \underset{h \sim f}{\mathbb{E}}\, \underset{h^{\prime} \sim f}{\mathbb{E}} \mathbb{I}\left[h({\bm x}) \neq y\right]  \mathbb{I}\left[h^{\prime}({\bm x}) \neq y\right],
\end{equation}
where $\mathbb{I}[\mathrm{True}] = 1$ if the inner condition is true, and otherwise $\mathbb{I}[\mathrm{False}]=0$, and $\mathcal{A}_\mathcal{X}$ is the marginal distribution of domain $\mathcal{A}$. $h$ and $h^\prime$ are votes sampled from the posterior distribution $f$ over $\mathcal{H}$. With these definitions, studies~\cite{lacasse2006pac, germain2015risk} reveal a relationship among the Gibbs risk, the voters' disagreement, and the joint error as
\begin{equation}
\mathcal{R}_\mathcal{A}(f) =\underset{({\bm x}, y) \sim \mathcal{A}}{\mathbb{E}}\, \underset{h \sim f}{\mathbb{E}}\, \underset{h^{\prime} \sim f}{\mathbb{E}} \frac{\mathbb{I}\left[h({\bm x}) \neq h^{\prime}({\bm x})\right]+2 \mathbb{I}\left[h({\bm x}) \neq y \wedge h^{\prime}({\bm x}) \neq y\right]}{2} = \frac{1}{2}\mathrm{d}_\mathcal{A}(f) + \mathrm{e}_\mathcal{A}(f).
\end{equation}
In this case, we can extend this relationship to the empirical estimations (suppose a dataset $A$ is sampled from domain $\mathcal{A}$) as
\begin{equation}
\widehat{\mathcal{R}}_{A}(f) =\frac{1}{|A|}\underset{({\bm x}, y) \sim A}{\sum}\, \underset{h \sim f}{\mathbb{E}}\, \underset{h^{\prime} \sim f}{\mathbb{E}} \frac{\mathbb{I}\left[h({\bm x}) \neq h^{\prime}({\bm x})\right]+2 \mathbb{I}\left[h({\bm x}) \neq y \wedge h^{\prime}({\bm x}) \neq y\right]}{2} = \frac{1}{2}\widehat{\mathrm{d}}_{A}(f) + \widehat{\mathrm{e}}_{A}(f).
\end{equation}
Then we can use $\widehat{\mathcal{R}}_T(f)$ and $\widehat{\mathcal{R}}_S(f)$ to replace $\widehat{\mathrm{d}}_T(f)$ and $\widehat{\mathrm{e}}_S(f)$ in Eq.~(\ref{proof_th6}), respectively. In the end, we can follow Xu \etal~\cite{xu2021weak} to regard these empirical risks as data quantity-irrelevant when analyzing the impact of data quantity.

Next, we focus on the proof of the numerical relationship $\beta_{\infty}(\mathcal{T} \| \mathcal{S}) \geq 1$. First of all, $\beta_{\infty}(\mathcal{T} \| \mathcal{S})$ comes from a more general definition that is parameterized by a real value $q>0$, shown as
\begin{equation}
\beta_q(\mathcal{T} \| \mathcal{S})=\left[\underset{({\bm x}, y) \sim \mathcal{S}}{\mathbb{E}}\left(\frac{\mathcal{P}_{({\bm x}, y) \in \mathcal{T}}}{\mathcal{P}_{({\bm x}, y) \in \mathcal{S}}}\right)^q\right]^{\frac{1}{q}}.
\end{equation}
For any $q>0$, $\beta_q(\mathcal{T} \| \mathcal{S})$ can be also written as a \'{R}enyi Divergence-based form~\cite{germain2016new}, \ie,
\begin{equation}
\beta_q(\mathcal{T} \| \mathcal{S})=2^{\frac{q-1}{q} D_q(\mathcal{T} \| \mathcal{S})},
\label{proof_renyi}
\end{equation}
where $D_q(\mathcal{T} \| \mathcal{S})$ is the \'{R}enyi Divergence between $\mathcal{T}$ and $\mathcal{S}$ with the order $q$. For \'{R}enyi Divergence with any order $q>0$, there is a property of positivity~\cite{van2014renyi}, \ie, $D_q(\mathcal{T} \| \mathcal{S}) \geq 0$. In this case, when $q\rightarrow\infty$, Eq.~(\ref{proof_renyi}) becomes $\beta_q(\mathcal{T} \| \mathcal{S})=2^{D_q(\mathcal{T} \| \mathcal{S})} \geq 1$, and $\beta_q(\mathcal{T} \| \mathcal{S})=2^{D_q(\mathcal{T} \| \mathcal{S})}=1$ when $\mathcal{T}=\mathcal{S}$, in other words, $D_q(\mathcal{T} \| \mathcal{S})=0$ when $\mathcal{T}=\mathcal{S}$~\cite{van2014renyi}.
\end{proof}

\section{Technical Details for Generating Protected Dataset}
\label{sec:PD_details}

\subsection{The Optimization Solution for Generating Protected Dataset}
Recall that \cref{eq:gm} is formulated as follows.
\begin{align}
    \label{eq:gm2}
    &\min_{\bm{\delta} \subset \mathcal{B}}~\left[\mathbb{E}_{(\bm{\hat{x}},y)\sim\mathcal{T}} [ \mathcal{L}\left(f(\bm{\hat{x}};\bm{w}(\bm{\delta})), y \right)]-\lambda_3 \min\left\{\mathbb{E}_{(\overline{\bm{x}}, y)\sim \overline{\mathcal{T}}} [ \mathcal{L}\left(f(\overline{\bm{x}};\bm{w}(\bm{\delta})), y \right)],\lambda_4 \right\}\right],\\
    &s.t.~\bm{w}(\bm{\delta}) = \arg \min_{\bm{w}} \left[\frac{1}{|\mathcal{D}_{s}|}\sum_{(\bm{x}_i,y_i)\in \mathcal{D}_{s}} \mathcal{L}\left(f(\bm{x}_i+\bm{\delta}_i;\bm{w}),y_{i}\right)+\frac{1}{|\mathcal{D}_{b}|}\sum_{(\bm{x_{j}},y_{j})\in \mathcal{D}_{b}} \mathcal{L}\left(f(\bm{x}_{j};\bm{w}),y_{j}\right) \right],\notag
\end{align}
where $\mathbb{E}_{(\overline{\bm{x}}, y)\sim \overline{\mathcal{T}}} [ \mathcal{L}\left(f(\overline{\bm{x}};\bm{w}(\bm{\delta})), y \right)]$ represents the expected risk for the watermarked model on other unseen domains (\ie,$\overline{\mathcal{T}}$) and $\mathcal{B}=\{\bm{\delta}:||\bm{\delta}||_{\infty}\leq \epsilon\}$ where $\epsilon$ is a visibility-related hyper-parameter. 

The aforementioned problem is a standard bi-level problem, we following previous work~\cite{souri2022sleeper,geiping2021witches} to leverage \emph{gradient matching} to solving it. Specifically, we first make the following definition:
\begin{align}
   \mathcal{L}_{t} = \mathbb{E}_{(\bm{\hat{x}},y)\sim\mathcal{T}} [ \mathcal{L}\left(f(\bm{\hat{x}};\bm{w}), y \right)]-\lambda_3 \min\left\{\mathbb{E}_{(\overline{\bm{x}}, y)\sim \overline{\mathcal{T}}} [ \mathcal{L}\left(f(\overline{\bm{x}};\bm{w}), y \right)],\lambda_4 \right\},
\end{align}

\begin{align}
  \mathcal{L}_{i} = \frac{1}{|\mathcal{D}_{s}|}\sum_{(\bm{x}_i,y_i)\in \mathcal{D}_{s}} \mathcal{L}\left(f(\bm{x}_i+\bm{\delta}_i;\bm{w}),y_{i}\right).
\end{align}

According to the gradient-matching technique~\cite{souri2022sleeper,geiping2021witches}, we have the Upper-level Sub-problem as:

\begin{align}
    \max_{\bm{\delta} \subset \mathcal{B}}~\frac{\bigtriangledown_{\bm{w}}\mathcal{L}_{t}\cdot\bigtriangledown_{\bm{w}}\mathcal{L}_{i}}{||\bigtriangledown_{\bm{w}}\mathcal{L}_{t}|| \cdot ||\bigtriangledown_{\bm{w}}\mathcal{L}_{i}||},
\end{align}

where we aim to maximize the gradient matching degree between $\bigtriangledown_{\bm{w}}\mathcal{L}_{t}$ and $\bigtriangledown_{\bm{w}}\mathcal{L}_{i}$ using $\texttt{cosine}(\cdot)$ similarity as the metric through optimizing $\delta$. We solve the above upper-level sub-problem via projected gradient ascend. We here use calculate $\mathbb{E}_{(\bm{\hat{x}},y)\sim\mathcal{T}} [ \mathcal{L}\left(f(\bm{\hat{x}};\bm{w}), y \right)]$ following:

\begin{align}
    \mathbb{E}_{(\bm{\hat{x}},y)\sim\mathcal{T}} [ \mathcal{L}\left(f(\bm{\hat{x}};\bm{w}), y \right)]=\frac{1}{N}\sum_{(\bm{x},y)\in\mathcal{D}} \mathcal{L}(f(G_d(\bm{x});\bm{w}),y).
\end{align} 

Regarding the lower-level sub-problem, we have:
\begin{align}
    \min_{\bm{w}}~\left[\frac{1}{|\mathcal{D}_{s}|}\sum_{(\bm{x}_i,y_i)\in \mathcal{D}_{s}} \mathcal{L}\left(f(\bm{x}_i+\bm{\delta}_i;\bm{w}),y_{i}\right)+\frac{1}{|\mathcal{D}_{b}|}\sum_{(\bm{x_{j}},y_{j})\in \mathcal{D}_{b}} \mathcal{L}\left(f(\bm{x}_{j};\bm{w}),y_{j}\right) \right].
\end{align}

After obtaining the poisoned dataset (\ie, $\mathcal{D}_{s} \cup \mathcal{D}_{b}$), we can optimize the model (\ie, ResNet-18) parameters $\bm{w}$ via solving the above Lower-level Upper-sub problem. The above Lower-level Upper-sub problem is solved via stochastic gradient descent. 

We optimize the Upper-level and Lower-level Sub-problems alternatively for each optimization iteration. Specifically, we first train the model under benign dataset $\mathcal{D}$. Then for each iteration, we first optimize the Upper-level Sub-problem based on the trained model and obtain the perturbation $\delta$. After that, we optimize the Lower-level Sub-problem based on the obtained poisoned dataset. During each iteration for optimizing the above bi-level optimization problems, we optimize the Upper-level Sub-problem with 50 iterations, and optimize the Lower-level Sub-problem with 100 iterations. We optimize the entire bi-level optimization with five epochs. The other details for optimization hyper-parameters as well as configuration are consistent with ~\cite{geiping2021witches,sleeperagent}.

In particular, to ensure the effectiveness of solving the aforementioned bi-level optimization problem, we have two additional strategies, as follows:

\begin{itemize}
    \item \textbf{Strategy 1:} Instead of randomly selecting samples from benign dataset $\mathcal{D}$, we here choose to select training samples with the largest
gradient norms, following the previous work~\cite{souri2022sleeper}.
    \item \textbf{Strategy 2:} Instead of selecting samples from all classes, we follow the previous work \cite{li2022untargeted} to select those from a specific class and the selected class is set as the target label. This strategy can enhance the effectiveness for solving the above bi-level optimization problem while preserving the verification performance for our approach.
\end{itemize}

\subsection{The Process of Generating Samples from Other Domains}
In this part, we describe how to generate samples from other domains (\ie, $(\bm{\overline{x}},y)\sim \overline{\mathcal{T}}$). 

After obtaining the transformation module $G_d(\cdot)$, we can generate hard-generalized domain samples from a specific domain. We here propose to generate samples from other domains by setting different configurations of $\{w_{i}\}_{1}^{4}$. For example, we can generate samples from the other domain by sampling $\{w_{i}\}_{1}^{4}$ with another values following $w_{i} \sim N(0,1)$. 

We here show some demonstration of samples from other domains in \cref{fig:dm_demo}.

\begin{figure}[!t]
    \centering
    \includegraphics[width=1\textwidth]{./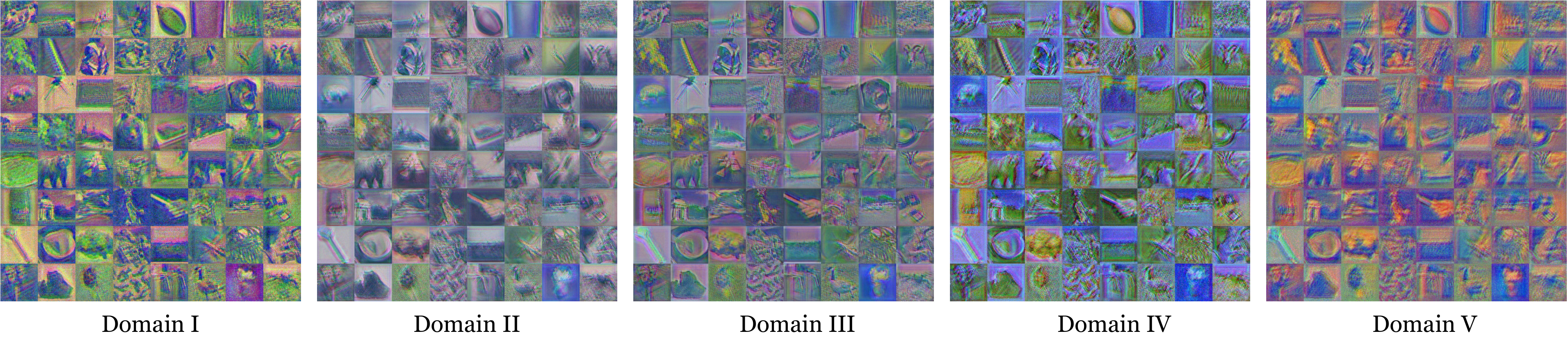}
    \vspace{-1em}
    \caption{The example of samples generated from various domain.}
    \label{fig:dm_demo}
\end{figure}

We here generated samples from other domains, and estimate $\mathbb{E}_{(\overline{\bm{x}}, y)\sim \overline{\mathcal{T}}} [ \mathcal{L}\left(f(\overline{\bm{x}};\bm{w}), y \right)]$ following:

\begin{align}
    \mathbb{E}_{(\overline{\bm{x}}, y)\sim \overline{\mathcal{T}}} [ \mathcal{L}\left(f(\overline{\bm{x}};\bm{w}(\bm{\delta})), y \right)] = \frac{1}{N}\frac{1}{J}\sum_{j}\sum_{(\bm{\overline{x},y})\in\overline{\mathcal{T}}_{j}} \mathcal{L} (f(\bm{\overline{x}};\bm{w}),y), 
\end{align}
where $\overline{\mathcal{T}}_{j}$ represents the $i$-th unseen domain generated by the above approach.
\subsection{The Selection of Hyper-parameters}

After generating other unseen domains $\mathcal{T}$, we here describe the selection of hyper-parameters (\ie, $J$ and $\lambda_3$) for generating protected dataset. 

We here propose a heuristic approach for selecting $J$ and $\lambda_3$. Specifically, we first keep $\lambda_3$ fixed (\ie,1) and adjust $J$. We conduct empirical study on CIFAR-10 tasks, the results are shown in \cref{fig:j}.

\begin{figure}[!t]
\centering
\begin{minipage}[t]{0.45\linewidth}
\centering
\includegraphics[width=\textwidth]{./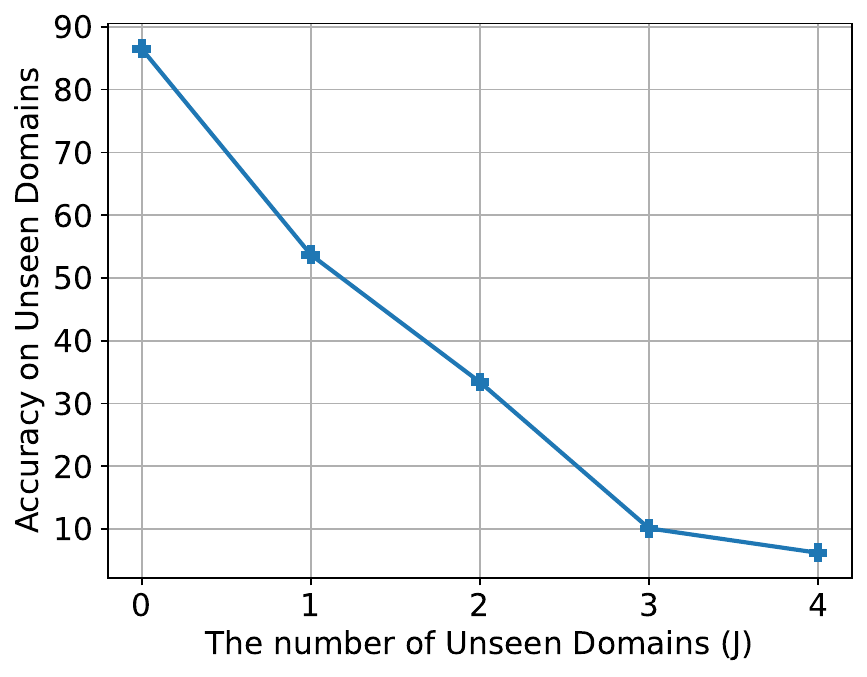}
\vspace{-1.5em}
\caption{Effects of the number of Unseen Domains $J$.}
\label{fig:j}
\end{minipage}\hspace{1.5em}
\begin{minipage}[t]{0.45\linewidth}
\centering
\includegraphics[width=1\textwidth]{./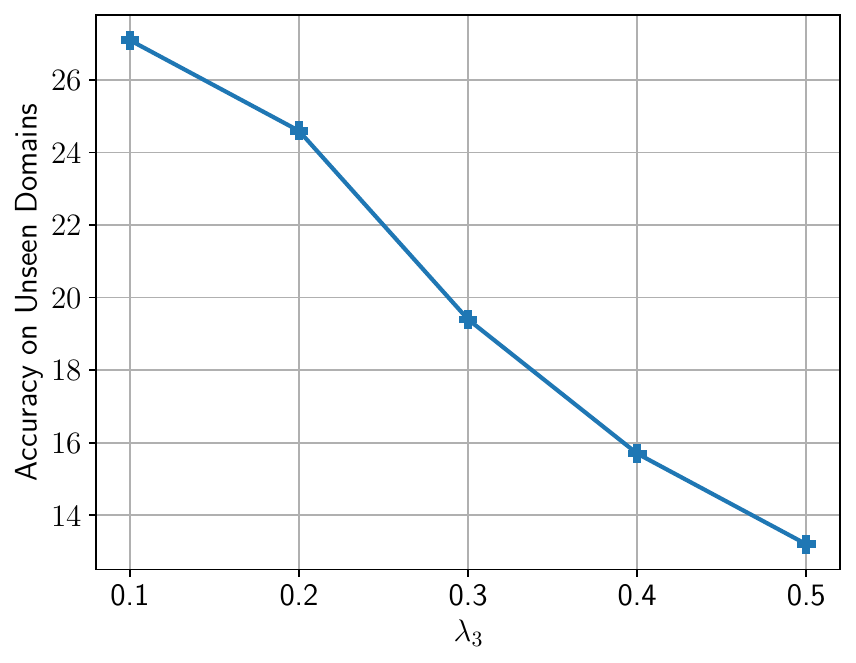}
\vspace{-1.5em}
\caption{Effects of the $\lambda_3$.}
\label{fig:l}
\end{minipage}

\end{figure}

We use ResNet-18 as the evaluated model. We generate several unseen domains using the above approach. We randomly select $J$ of these domains for optimizing the \cref{eq:gm}, and select $3$ unseen domains as the validation data. Notably, the validation domains are ensured visually different from the domains used for optimization. 

From \cref{fig:j}, we find that using $\geq$ three unseen domains is sufficient to constrain the generalization performance for validation unseen domains. Therefore, we set $J$ as 3 for our approach. 

After that, we keep $J$ fixed, and adjust $\lambda_3$ gradually, the results are shown in \cref{fig:l}. We find that when $\lambda_3$ becomes smaller, the constraint for performance on other unseen domains reduces. Accordingly, we set $\lambda_3$ as 0.3 for our approach since it can achieve a close generalization capacity compared to the benign DNN model (\ie, $24.3\%$).  


\section{The Proof for Theorem 2}
\label{sec:thm2_proof}

\begin{theorem}
Let $f(\bm{x})$ is the posterior probability of $\bm{x}$ predicted by the suspicious model, variable $\bm{X}$ denotes the benign sample with label $Y$, and variable $\bm{X}'$ is the domain-watermarked version of $\bm{X}$. Assume that $P_b\triangleq f(\bm{X})_{Y} > \eta$. We claim that dataset owners can reject the null hypothesis $H_0$ at the significance level $\alpha$, if the verification success rate (VSR) $V$ of $f$ satisfies that
\begin{equation}
\sqrt{m-1} \cdot (V- \eta + \tau) - t_{\alpha} \cdot \sqrt{V-V^2} > 0,
\end{equation}
where $t_{\alpha}$ is $\alpha$-quantile of t-distribution with $(m-1)$ degrees of freedom and $m$ is sample size.
\end{theorem}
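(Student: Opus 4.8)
The plan is to translate the rejection rule of the paired one-sided $t$-test from the preceding Proposition into an explicit condition on the verification success rate $V$, and then to simplify that condition using the Bernoulli-type structure of the per-sample verification outcomes. For the $m$ sampled pairs I would work with the difference variable $D_i \triangleq P_d^{(i)} - P_b^{(i)}$. Since $H_0: P_b = P_d + \tau$ is equivalent to $\mathbb{E}[D] = -\tau$ and $H_1: P_b < P_d + \tau$ to $\mathbb{E}[D] > -\tau$, this is a right-tailed test, which rejects $H_0$ at significance level $\alpha$ precisely when the statistic $t = \sqrt{m}\,(\bar{D} + \tau)/s_D$ exceeds the critical value $t_\alpha$, where $\bar{D}$ and $s_D$ denote the sample mean and the Bessel-corrected sample standard deviation of $\{D_i\}_{i=1}^{m}$. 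The proof then reduces to lower-bounding this statistic in terms of $V$.

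Next I would make the dependence on $V$ explicit. By the definition of VSR, each verification sample is either classified into its ground-truth label or not, so I model the per-sample contribution of $P_d^{(i)}$ through its $0/1$ success indicator, which gives $\overline{P_d} = V$; combined with the assumption $P_b^{(i)} > \eta$ evaluated at the operating point $P_b^{(i)} = \eta$, this yields $\bar{D} = V - \eta$ and hence $\bar{D} + \tau = V - \eta + \tau$. The decisive computation is the sample variance of a $0/1$ sequence of mean $V$: one has $\sum_{i=1}^{m} (P_d^{(i)} - V)^2 = m\,V(1-V)$, so that $s_D^2 = \tfrac{m}{m-1}\,V(1-V)$ and therefore $s_D = \sqrt{\tfrac{m}{m-1}}\,\sqrt{V - V^2}$. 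This single identity produces both the $\sqrt{V - V^2}$ factor and the replacement of $m$ by $m-1$ in the final bound.

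Finally I would substitute these expressions back into $t = \sqrt{m}\,(\bar{D} + \tau)/s_D$. The $\sqrt{m}$ in the numerator cancels against the $\sqrt{m/(m-1)}$ hidden inside $s_D$, leaving $t = \sqrt{m-1}\,(V - \eta + \tau)/\sqrt{V - V^2}$. Imposing the rejection rule $t > t_\alpha$ and multiplying through by the positive quantity $\sqrt{V - V^2}$ yields exactly $\sqrt{m-1}\,(V - \eta + \tau) - t_\alpha\,\sqrt{V - V^2} > 0$, which is the claimed inequality. I expect the main obstacle to lie in the modeling step rather than the algebra: namely, justifying that the continuous posterior probabilities $P_b^{(i)}, P_d^{(i)} \in [0,1]$ may be collapsed onto the binary success indicator (so that $\overline{P_d} = V$ and the variance identity applies), and carefully tracking the direction of the one-sided test together with the role of the assumption $P_b > \eta$ and the quantile convention for $t_\alpha$, so that every inequality points the correct way.
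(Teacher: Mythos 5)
Your proposal is correct and follows essentially the same route as the paper's proof: both use the assumption $P_b > \eta$ to reduce the paired test to a one-sided condition on $P_d$ alone, model the per-sample outcomes as Bernoulli indicators so that the sample mean is $V$ and the Bessel-corrected variance is $\tfrac{m}{m-1}(V-V^2)$, and substitute into the $t$-statistic to cancel $\sqrt{m}$ and obtain the stated inequality. The modeling step you flag as the main obstacle (collapsing continuous posteriors onto $0/1$ indicators) is handled in the paper by exactly the same identification, so there is no gap relative to the published argument.
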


\begin{proof}
Since $\bm{P}_b > \eta$, the original hypothesis $H_1$ can be converted to
\begin{equation}
  H_1':  P_d > \eta - \tau.
\end{equation}

Let $E$ indicates the event of whether the suspect model $f$ predicts a watermark sample as its ground-truth label $y$. As such, 
$
    E \sim B(1, p),
$
where $p = \Pr(C(\bm{X}')=Y)$ indicates the verification success probability and $B$ is the Binomial distribution \cite{schmetterer2012introduction}. 

Let $\bm{\hat{x}}_1, \cdots, \bm{\hat{x}}_m$ denotes $m$ domain-watermarked samples used for dataset verification and $E_1, \cdots, E_m$ denote their prediction events, we know that the verification success rate $V$ satisfies
\begin{align}
    V = \frac{1}{m} \sum_{i=1}^{m} E_i, \\
    V \sim \frac{1}{m} B(m, p).
\end{align}

According to the central limit theorem \cite{schmetterer2012introduction}, the verification success rate $V$ follows Gaussian distribution $\mathcal{N}(p, \frac{p(1-p)}{m})$ when $m$ is sufficiently large. Similarly, $(P_d - \eta + \tau)$ also satisfies Gaussian distribution. Accordingly, we can construct the t-statistic as follows:

\begin{equation}
    T \triangleq \frac{\sqrt{m}(W- \eta + \tau)}{s} \sim t(m-1),
\end{equation}
where $s$ is the standard deviation of $(V- \eta + \tau)$ and $V$, \ie, 
\begin{equation}\label{eq:std}
    s^2 = \frac{1}{m-1} \sum_{i=1}^m (E_i-V)^2 = \frac{1}{m-1}(m \cdot V - m \cdot V^2). 
\end{equation}

To reject the hypothesis $H_0$ at the significance level $\alpha$, we need to ensure that 
\begin{equation}\label{eq:rej_con}
  \frac{\sqrt{m}(V - \eta + \tau)}{s} > t_{\alpha},  
\end{equation}
where $t_{\alpha}$ is the $\alpha$-quantile of t-distribution with $(m-1)$ degrees of freedom.

According to equation (\ref{eq:std})-(\ref{eq:rej_con}), we have
\begin{equation}
    \sqrt{m-1} \cdot (V- \eta + \tau) - t_{\alpha} \cdot \sqrt{V-V^2} > 0.
\end{equation}

\end{proof}

\begin{figure}
    \centering
    \includegraphics[width=0.8\textwidth]{./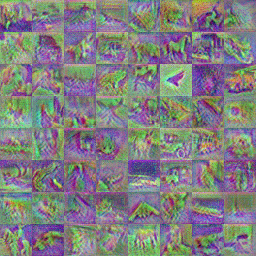}
    \caption{The example of domain watermark for CIFAR-10.}
    \label{fig:cifar_demo}
\end{figure}

\begin{figure}
    \centering
    \includegraphics[width=0.8\textwidth]{./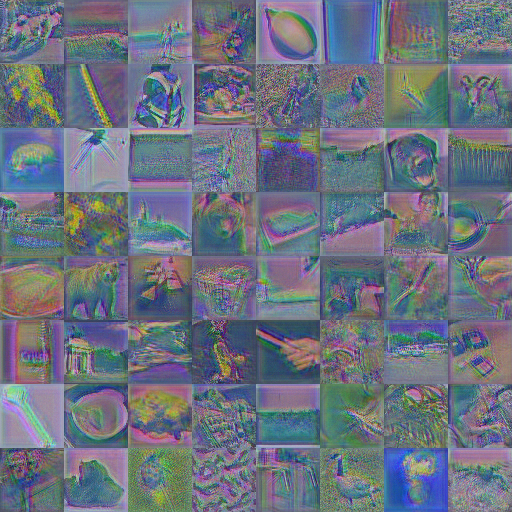}
    \caption{The example of domain watermark for Tiny-ImageNet.}
    \label{fig:tiny_demo}
\end{figure}

\begin{figure}
    \centering
    \includegraphics[width=0.8\textwidth]{./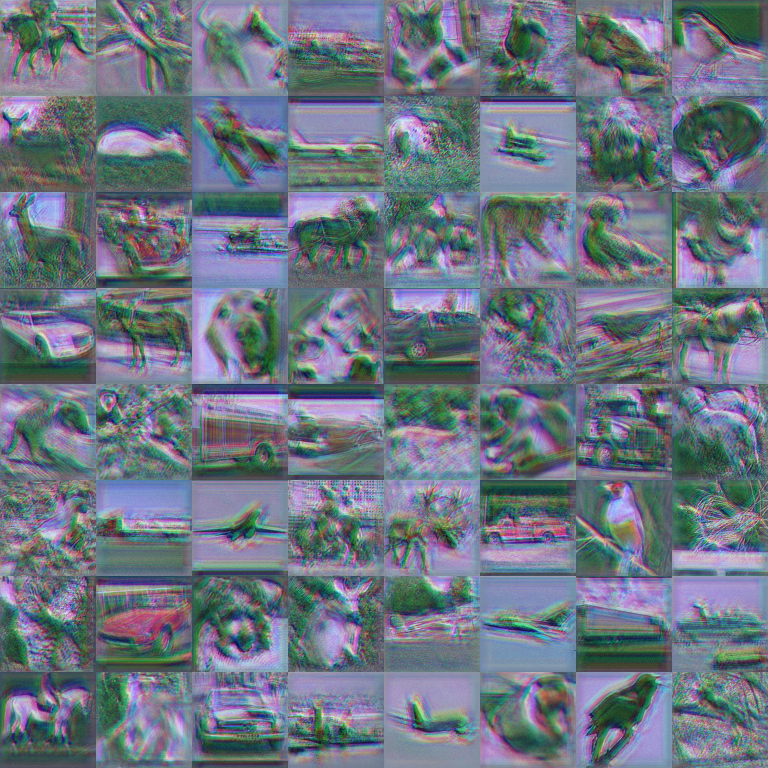}
    \caption{The example of domain watermark for STL-10.}
    \label{fig:stl_demo}
\end{figure}

\section{The Detailed Settings for Experimental Datasets and Configurations}
\label{sec:setting_details}

\subsection{Datasets}
We evaluate our approach on three benchmark datasets (\ie, CIFAR-10~\cite{krizhevsky2009learning}, Tiny-ImgaeNet~\cite{le2015tiny}, and STL-10~\cite{coates2011analysis}). We here describe each benchmark dataset in detail.

\paragraph{CIFAR-10.} CIFAR-10 dataset contains 10 labels, 50,000 training samples, and 10,000 validation samples. The training and validation samples are distributed evenly across each label. Each sample is resized as $32\times32$ by default. 

\paragraph{Tiny-ImageNet.} Tiny-ImageNet dataset contains 200 labels, 100,000 training samples, and 10,000 validation samples. The training and validation samples are distributed evenly across each label. Each sample is resized as $64\times64$ by default. 

\paragraph{STL-10.} STL-10 dataset contains 10 labels and 13,000 labeled  samples and 100,000 unlabeled samples. We divide the labeled samples into the training and validation dataset with a ratio of $8:2$. The training and validation samples are distributed evenly across each label. Each sample is resized as $96\times96$ by default.

\subsection{The Demonstration of Domain Watermark for Each Dataset}

We here show the domain watermark used for evaluating the effectiveness of our approach in the experiments. The demonstrations are shown in Figure \ref{fig:cifar_demo}, Figure \ref{fig:tiny_demo}, and Figure \ref{fig:stl_demo} for CIFAR-10, Tiny-ImageNet, and STL-10 datasets, respectively.

\subsection{Training Configurations}
In the experiments, we train each model with 150 epochs with an initialized learning rate of 0.1.  Following previous work~\cite{sleeperagent,guo2020practical}, we schedule learning rate drops at epochs 14, 24, and 35 by a factor of 0.1. For all models, we employ
SGD with Nesterov momentum, and we set the momentum coefficient to 0.9. We use batches of 128
images and weight decay with a coefficient of $4\times10^{-4}$.  For each run, we report the verification success rate (VSR) averaged over the last 10 epochs when the models' accuracy converges. We report the results for each approach averaged over 5 runs.

\begin{table}[!t]
\centering
\caption{Summary of accuracy ($\%$) on samples from different domains for normal models and ours.}
\vspace{-0.5em}
\begin{tabular}{c|c|c|c|c|c|c}
\toprule
\multirow{2}{*}{Task} & \multicolumn{2}{c|}{Source domain} & %
    \multicolumn{2}{c|}{Target domain} & \multicolumn{2}{c}{Other domain} \\
\cline{2-7}
&Normal&Ours&Normal&Ours &Normal &Ours\\
\hline
CIFAR-10&91.89&90.86&13.10&90.45&15.10&10.30\\
\hline
STL-10&85.61&84.58&9.50&82.00&16.00& 11.60\\
\hline
Tiny-ImageNet&60.13&59.10&6.00&58.08&12.60&15.40
\\
\bottomrule

\end{tabular}

\label{table:summary}
\end{table}

\begin{table}[!t]
\centering

\caption{The watermark performance on STL-10 dataset. In particular, we mark harmful watermark results ($i.e.$, $H > 0.5$ and $\hat{H} > 0$) in red. }
\vspace{-0.5em}
\scalebox{0.82}{
\begin{tabular}{@{}c|c|cc|cc@{}}
\toprule
&& \multicolumn{4}{c}{STL-10}                \\ \hline
\multicolumn{1}{c|}{Label Type$\downarrow$}                                 & Method$\downarrow$, Metric$\rightarrow$   & BA (\%) & VSR (\%) &  $H$ & $\hat{H}$ \\ \hline

\multicolumn{1}{c|}{\multirow{4}{*}{Poisoned-Label}} &   BadNets &  85.61 &  100     &    \textcolor{red}{1.00} & \textcolor{red}{0.86}  \\
\multicolumn{1}{c|}{}                                                          & Blended      &  85.21 &  99.32     &   \textcolor{red}{1.00} & \textcolor{red}{0.84}   \\
\multicolumn{1}{c|}{}                                                          & WaNet    & 83.17   &  96.10    &   \textcolor{red}{0.96} &\textcolor{red}{0.79}  \\  
\multicolumn{1}{c|}{}                                             &  UBW-P     &  84.22      &  80.27     & \textcolor{red}{0.80} &\textcolor{red}{0.64}\\ \hline \hline
\multicolumn{1}{c|}{\multirow{4}{*}{Clean-Label}}  &  Label-Consistent&  84.07    & 93.48      &   \textcolor{red}{0.93} &\textcolor{red}{0.77}   \\
\multicolumn{1}{c|}{}                                                         & Sleeper Agent     &  83.72 & 89.77 & \textcolor{red}{0.90} & \textcolor{red}{0.73}  \\  
\multicolumn{1}{c|}{}                                           &   UBW-C   &  79.32  &  82.00      &  \textcolor{red}{0.82} &\textcolor{red}{0.61}    \\  \cline{2-6} \multicolumn{1}{c|}{}                                         &    DW (Ours)  &  84.58  &   82.00   &  0.18 &-0.73   \\
\bottomrule
\end{tabular}
}
\label{tab:attack_STL-10}
\vspace{-0.5em}
\end{table}
\subsection{The Details for Implementing each Approach}

We implement each backdoor technique using \texttt{Backdoorbox} library\footnote{https://github.com/THUYimingLi/BackdoorBox} following the default training configurations. Specifically, for patch-based triggers, we use $3\times3$, $6\times6$, and $9\times9$ for CIFAR-10, Tiny-ImageNet, and STL-10.  Following previouw work~\cite{li2022untargeted}, for each approach, we randomly select a label as the target label for ownership verification purposes. For the other input-specific trigger (\ie, WaNet~\cite{nguyen2021wanet}), we follow its default configuration to generate its specific trigger pattern.

\section{The Additional Results for the Performance of Domain Watermark}
\label{sec:STL-10}

We first show the summary for the performance of our approach and benign samples on samples from different domains. The results are shown in Table \ref{table:summary}.  We also show additional results for STL-10 dataset with ResNet-34 as shown in Table \ref{tab:attack_STL-10}.

\section{The Detailed Settings for  Dataset Ownership Verification}
\label{sec:DV_settings}

We evaluate our domain-watermark-based dataset ownership verification under three scenarios, including \textbf{1)} independent domain (dubbed `Independent-D'), \textbf{2)} independent model (dubbed `Independent-M'), and \textbf{3)} unauthorized dataset training (dubbed `Malicious'). In the first case, we used domain-watermarked samples to query the suspicious model trained with modified samples from another domain; In the second case, we test the benign model with our domain-watermarked samples; In the last case, we test the domain-watermarked model with corresponding domain-watermarked samples. Notice that only the last case should be regarded as having unauthorized dataset adoption. All other settings are the same as those used in \cite{li2022untargeted}. 

Consistent with previouw work~\cite{li2022untargeted}, we adopt the trigger
used in the training process of the watermarked suspicious model in the last scenario. Moreover,
we sample $m=100$ samples on CIFAR10, STL-10, and Tiny-ImageNet and set $\tau =0.25$ for
the hypothesis-test in each case for our approach.  Since Tiny-ImageNet has only 50 samples for each class in the validation dataset, we combine additional 50 training samples with the validation samples for ownership verification. The additional 50 training samples are not used in generating the protected dataset. 

\section{The Additional Results  for  Dataset Ownership Verification}
We here investigate the effectiveness of ownership verification via our domain watermark. The results are shown in Table \ref{tab:verification_stl}. The settings are consistent with Section 5.

\begin{table}[!t]
\centering
\caption{The effectiveness of dataset ownership verification via our domain watermark.}
\vspace{-0.5em}
\begin{tabular}{c|ccc}
\toprule
& \multicolumn{3}{c}{STL-10}        \\ \hline
 & Independent-D & Independent-M & Malicious  \\ \hline
$\Delta P$ &       0.68      &       0.78      &       0.04
      \\
p-value      &   0.95          &   0.98       &     $10^{-46}$        \\ \bottomrule
\end{tabular}

\label{tab:verification_stl}
\end{table}

\section{Additional Results of Discussions}
\label{sec:ablation_new}

\subsection{The Effects of $\lambda_3$}
We have investigated the effects of $\lambda_3$, as shown in \cref{fig:l}. We find that the generalization performance decreases on other unseen validation domains with the increase of $\lambda_3$. When $\lambda_3$ increases up to $0.3$, the generalization performance on other unseen validation domains decreases close to the generalization performance for benign models.

\begin{figure}
    \centering
    \includegraphics[width=0.8\textwidth]{./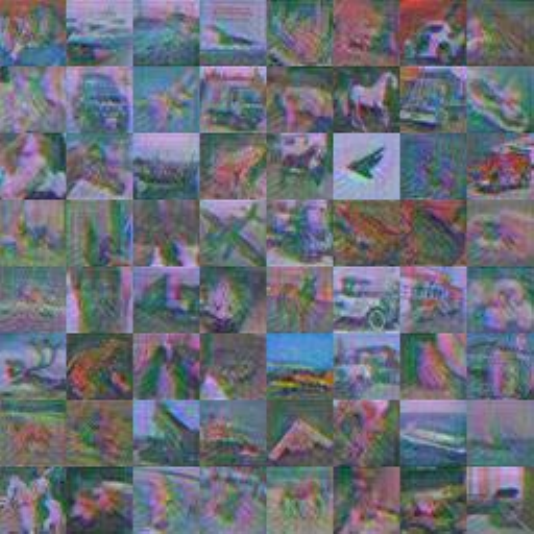}
    \caption{The Demonstration of Domain Watermark I.}
    \label{fig:val_1}
\end{figure}

\begin{figure}
    \centering
    \includegraphics[width=0.8\textwidth]{./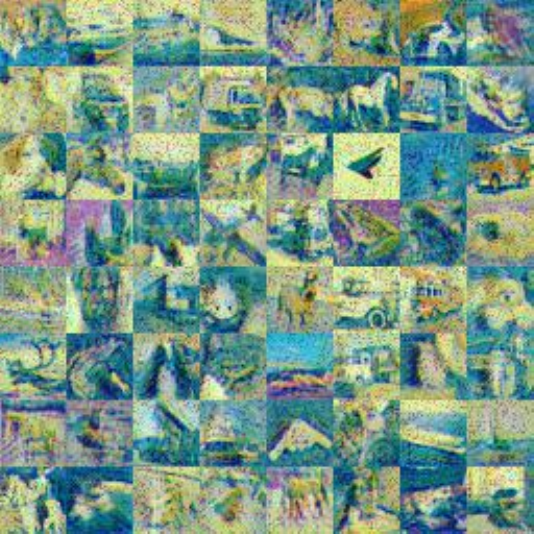}
    \caption{The Demonstration of Domain Watermark II.}
    \label{fig:val_2}
\end{figure}

\begin{figure}
    \centering
    \includegraphics[width=0.8\textwidth]{./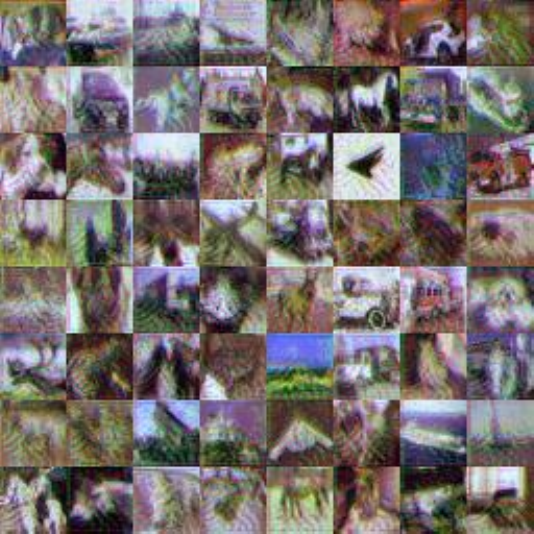}
    \caption{The Demonstration of Domain Watermark III.}
    \label{fig:val_3}
\end{figure}

\begin{table}[!t]
\centering
\caption{Summary of accuracy ($\%$) on samples from different domains for normal models and ours.}
\vspace{-0.5em}
\begin{tabular}{c|c|c|c|c|c|c}
\toprule
\multirow{2}{*}{Domain Watermark} & \multicolumn{2}{c|}{Source domain} & %
    \multicolumn{2}{c|}{Target domain} & \multicolumn{2}{c}{Other domain} \\
\cline{2-7}
&Normal&Ours&Normal&Ours &Normal &Ours\\
\hline
Domain Watermark I&92.46&92.10&18.50&91.40&16.30&17.60\\
\hline
Domain Watermark II&92.46&91.95&18.20&90.24&14.70& 15.80\\
\hline
Domain Watermark III&92.46&91.85&19.60&90.64&18.40&14.90
\\
\bottomrule
\end{tabular}
\label{table:summary_2}
\end{table}

\subsection{Performance under Different Domain Watermarks}
We here investigate the effective of protected dataset generation for different domain watermarks. We here craft domain watermarks following the Appendix \ref{app:HardDomain_details} but initialized with different parameters for crafting different domain watermarks. The demonstrations for different domain watermarks for CIFAR-10 are shown in \cref{fig:val_1,fig:val_2,fig:val_3}.

We here use CIFAR-10 with ResNet-34 to investigate the performance of our approach for different domain watermarks. The results are summarized in \cref{table:summary_2}. We can see our approach can still achieve effectiveness for different domain watermarks.

\subsection{The Transferability of Domain Watermark}

Recall that in the optimization process of our approach, we leverage a surrogate model (\ie, ResNet-18) for crafting modified samples. In the experiment section, we test the effectiveness of our approach under models (\ie, VGG-16-BN and ResNet-34) having different architectures and parameters from the surrogate model.  In practice, dataset users may adopt different model structures since dataset owners have no information about the model training. In this section, we conduct additional experiments on evaluating the effectiveness of  our approach under different structures compared to the one used for generating modified samples  ($i.e.$, transferability).

\begin{table}[!t]
\centering
\caption{The performance of our \name{} with different model structures trained on the watermarked dataset generated with ResNet-18.}
\vspace{-0.5em}
\begin{tabular}{c|cccc}
\toprule
Metric$\downarrow$, Model$\rightarrow$ & ResNet-18 & ResNet-34 & VGG-16-BN & VGG-19-BN \\ \hline
BA (\%)           & 91.39     & 92.54    & 90.86    & 92.57    \\
VSR (\%)         & 91.90    & 90.80    & 90.48     & 89.00     \\ \bottomrule
\end{tabular}
\label{tab:transferability_1}
\end{table}

\begin{table}[!t]
\centering
\caption{The performance of our \name{} with different model structures trained on the watermarked dataset generated with ResNet-34.}
\vspace{-0.5em}
\begin{tabular}{c|cccc}
\toprule
Metric$\downarrow$, Model$\rightarrow$ & ResNet-18 & ResNet-34 & VGG-16-BN & VGG-19-BN \\ \hline
BA (\%)           & 91.22     & 92.56    & 90.43    & 91.79    \\
VSR (\%)         & 90.10   & 92.44   & 89.60     & 90.36    \\ \bottomrule
\end{tabular}
\label{tab:transferability_2}
\end{table}

\begin{table}[!t]
\centering
\caption{The performance of our \name{} with different model structures trained on the watermarked dataset generated with VGG-16-BN.}
\vspace{-0.5em}
\begin{tabular}{c|cccc}
\toprule
Metric$\downarrow$, Model$\rightarrow$ & ResNet-18 & ResNet-34 & VGG-16-BN & VGG-19-BN \\ \hline
BA (\%)           & 91.57     & 92.10   &  90.53  & 92.10    \\
VSR (\%)         & 90.70    & 91.60    & 90.44     & 89.84     \\ \bottomrule
\end{tabular}
\label{tab:transferability_3}
\end{table}

\begin{table}[!t]
\centering
\caption{The performance of our \name{} with different model structures trained on the watermarked dataset generated with VGG-19-BN.}
\vspace{-0.5em}
\begin{tabular}{c|cccc}
\toprule
Metric$\downarrow$, Model$\rightarrow$ & ResNet-18 & ResNet-34 & VGG-16-BN & VGG-19-BN \\ \hline
BA (\%)           & 91.48     & 91.98    & 90.77    & 92.73    \\
VSR (\%)         & 91.30    & 89.60    & 90.36     & 91.94    \\ \bottomrule
\end{tabular}
\label{tab:transferability_4}
\end{table}

\paragraph{Settings.} We evaluate the transferability of our method on CIFAR-10. We adopt ResNet-18, ResNet-34, VGG-16-BN, and VGG-19-BN to peform domain watermark, based on which to train different models ($i.e.$, ResNet-18, ResNet-34, VGG-16-BN, and VGG-19-BN). Except for the model structure, all other settings are the same as those used in Section 5.

\paragraph{Results.} As shown in Table \ref{tab:transferability_1}-\ref{tab:transferability_4}, our method has high transferability across model structures. Accordingly, our methods are practical in protecting open-sourced datasets.

\section{Additional Results for the Resistance to Potential Adaptive Methods}
\label{sec:adaptive_new}

\noindent \textbf{Robustness against ShrinkPad.} We hereby discuss the robustness of our method against ShrinkPad~\cite{li2021backdoor}, which is a well-known watermarked sample detection approach based on a set of input transforamtions.  We follow \texttt{BackdoorBox} to implement ShrinkPad for filtering watermarked samples.  We use CIFAR-10 with ResNet-34 to implement \name{} and craft 1,000 watermarked samples based on the validation dataset for investigation. We first filter 900 watermarked samples that can be correctly classified.  We find ShrinkPad can only filter 87 effective watermarked samples among 900 samples ($\leq 10\%$), $i.e.$, our domain watermark is robust against ShrinkPad.

\paragraph{Robustness against Scale-UP.}
We also evaluate our method under the most advanced input-level watermark detection ($i.e.$, SCALE-UP~\cite{guo2023scaleup}). We follow their released code\footnote{https://github.com/JunfengGo/SCALE-UP} to implement SCALE-UP and use the AUROC score as the metric to report the results. We test our approach on SCALE-UP with 1,000 watermarked and 1,000 benign samples. We here use CIFAR-10 with ResNet-34. 

We find that SCALE-UP yields around $0.58$ AUROC score on our proposed \name{}.  Such results imply that SCALE-UP can not perform against our domain watermark, with the filtering performance close to random guesses. We think it may be caused by that, different from the previous backdoor-inspired watermark causing misclassification, \name{} leads the watermarked model correctly classifying the watermarked samples. Therefore, the watermarked samples would have a similar scaled prediction consistency as benign samples, since they all belong to the ground-truth label and can be clustered closely as shown in Section 5.3.2.

\paragraph{Robustness against Neural Cleanse.}

\begin{figure}
    \centering
    \includegraphics[width=0.8\textwidth]{./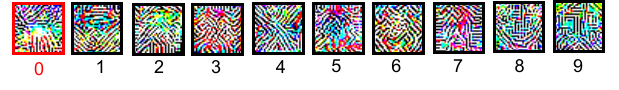}
    \vspace{-0.5em}
    \caption{The reversed trigger maps for each label produced by Neural Cleanse.}
    \label{fig:nc_1}
\end{figure}

\begin{figure}
    \centering
    \includegraphics[width=0.65\textwidth]{./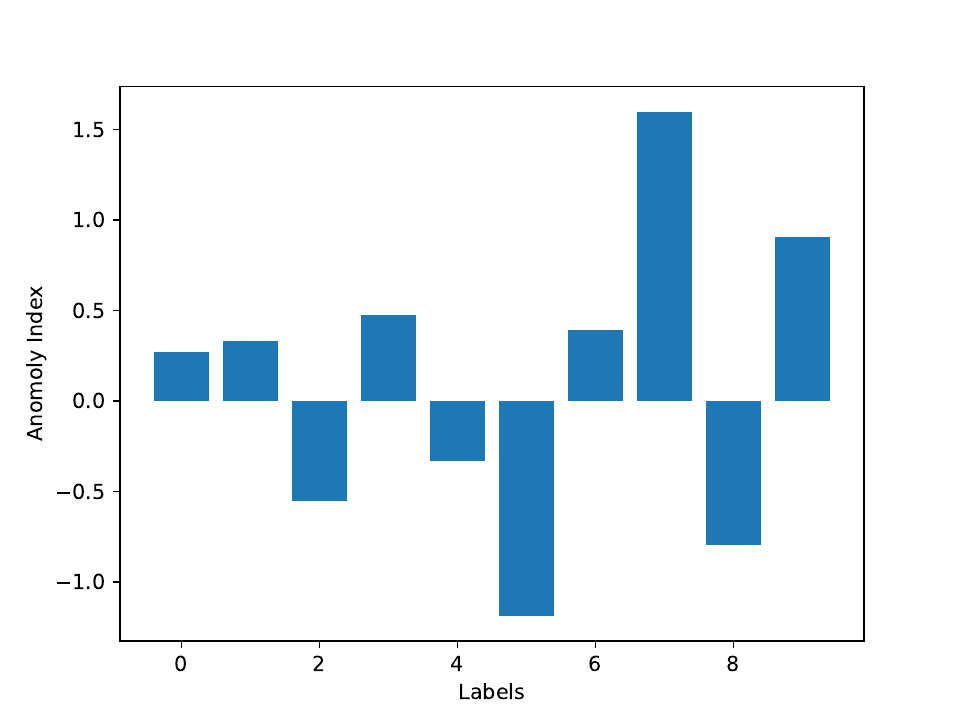}
    \vspace{-0.8em}
    \caption{The anomaly index for $\ell_{1}$-norm computed on the reversed trigger maps for each label produced by Neural Cleanse.}
    \label{fig:nc_2}
\end{figure}

Following previous work~\cite{souri2022sleeper}, we also evaluate our approach against reverse-engineering based approaches~\cite{guo2022aeva,guo2023policycleanse,wang2019neural}.We here evaluate our approach against Neural Cleanse~\cite{wang2019neural}, which is the most widely-adopted approach. We select label $0$ as the target label and use CIFAR-10 with ResNet-34. The results are shown in Figure \ref{fig:nc_1}-\ref{fig:nc_2}. We can see the reversed trigger pattern produced by Neural Cleanse for the target label is extremely dense. We further follow~\cite{wang2019neural} to calculate the anomaly index for each label using MAD outlier detection. We find that the target label's anomaly index is smaller than 2, thus it would not be detected.

We notice that there are still many other backdoor defenses ($e.g.$, \cite{hayase2021spectre,huang2022backdoor,wang2023towards}). We will discuss the resistance of our domain watermark method to them in our future works.

\section{Can We Identify Domain-watermarked Samples by Training on a Large Dataset or with Complicated Data Augmentation?}

In general, our method is developed on the foundation that only models trained on our protected dataset can successfully identify domain-watermarked samples. However, people may worry about whether the suspicious model can also generalize well on domain-watermarked samples because its good training strategies or structures instead of because being trained on the protected dataset. In this section, we discuss this potential problem.

Firstly, we cannot ensure this problem will never happen since training with more samples and complicated data augmentations may indeed increase the general generalizability of suspicious models. However, the probability of it happening is very small in practice, as follows. 

\begin{itemize}
    \item Hardly-generalized domain is an exclusive and special domain where only defenders know what it is. Accordingly, the adversaries ($i.e.$, malicious dataset users) cannot know it and use it to break our verification. 
    \item It is unlikely that a benign user can develop data augmentation that can generate (sufficiently) similar hard samples without knowing our specific settings, since the space of hardly-generalized domain is huge and our watermark is sample-specific and not unique.
    \item As shown in Figures 2-3 in the main manuscript as well as Figures 5-10 in the Appendix, domain-watermarked samples and not natural and very special. In other words, it is impossible that dataset users can collect them without using our protected dataset. Accordingly, it is not reasonable to claim that a model has good performance on domain-watermarked samples because their dataset is extremely large covering numerous hard samples.
    \item Lemma 1 indicates that we can find a hardly-generalized domain for the dataset, no matter what the model architecture is. Accordingly, it is not reasonable to claim that a model has good performance on domain-watermarked samples due to its well-designed architecture.
    \item Currently, no model can have similar generalizability to humans, although it was trained on a huge dataset with complicated data augmentation strategies and a well-designed structure. 
    \item Domain generalization is still an important yet unsolved problem in computer vision. Accordingly, there is no existing method can make DNNs generalize to all unseen domains (with performance on par to that of the source domain) for breaking our method.
    \item Our hypothesis-testing-based verification process ensures that suspicious models will be treated as trained on our protected dataset only when they have sufficiently high (not just certain) generalizability over a large number (not just a small number) of hard samples. Accordingly, our method can reduce the side effects of randomness to a large extent.
\end{itemize}

We admit that using data augmentation or other domain adaption techniques may increase the generalization ability of trained DNNs on domain-watermarked samples, although adversaries have no information of hardly-generalized domains and those augmented samples used for training are significantly different from those of domain-watermarked ones. However, this improvement is limited since these adapted domains are significantly different from the hardly-generalized domain (the domain space is high-dimensional and enormous). Accordingly, this mild improvement cannot break our method. To further verify it, we evaluate our method on models with classical automatic data augmentations ($e.g.$, color shifting) and domain adaption in Appendix \ref{sec:aug_resistance}-\ref{sec:DA_resistance}.

In particular, it does not diminish the practicality of our method, even if this rare event is likely to happen. Using verification-based defenses (in a legal system) requires an official institute for arbitration. Specifically, all commercial models should be registered here, based on the unique identification ($e.g.$, MD5 code) of their model files and training datasets, before being used online. When this official institute is established, its staff should take responsibility for the verification process. For example, they can require the company to provide the dataset with the same registered identification and then check whether it contains our protected samples (via algorithms). Of course, if the suspect model is proven to be benign, the user will need to pay a settlement to its company to prevent casual malicious ownership verification. In this case, even if our method misclassifies in rare cases, it does not compromise the interest of the suspect model. Our method is practical in this realistic situation, as long as it has a sufficiently high probability of correct verification.

\subsection{The Resistance to Data Augmentation}
\label{sec:aug_resistance}

\noindent \textbf{Settings.} We adopt strong data augmentations to train models on benign datasets with the same settings used in our paper. Specifically, we adopt random flip, random rotation, random affine transformations, random color shifting as the augmentations on CIFAR-10 and Tiny-ImageNet datasets.

\noindent \textbf{Results.} As shown in Table \ref{tab:aug_resistance}, our method will not misjudge (p-value $\gg 0.05$), although data augmentation can (slightly) increase the accuracy on our domain-watermarked samples. These results verify that our method is resistant to data augmentations.

\begin{table}[!t]
\centering
\caption{The performance of training with strong data augmentations on the benign dataset.}
\vspace{-0.5em}
\begin{tabular}{ccccc}
\toprule
Dataset$\downarrow$, Metric$\rightarrow$ & BA (\%)    & Accuracy on DW samples (\%) & $\Delta P$    & p-value \\ \hline
CIFAR-10        & 91.50 & 30.15                  & 0.76 & 1.00    \\ \hline
Tiny-ImageNet   & 75.84 & 6.00                   & 0.63 & 1.00  \\
\bottomrule
\end{tabular}
\label{tab:aug_resistance}
\end{table}

\subsection{The Resistance to Domain Adaption}
\label{sec:DA_resistance}

\noindent \textbf{Settings.} In this part, we adopt L2D \cite{Wang2021learning} as the representative method to investigate the robustness of our domain watermark against domain adaption techniques. Specifically, we train both the watermarked and benign DNNs with L2D on CIFAR-10 and Tiny-ImageNet datasets.

\noindent \textbf{Results.} As shown in Table \ref{tab:L2D_main}-\ref{tab:L2D_verification}, our method is still highly effective under the domain adaption setting. It is mostly because the VSR improvement caused by domain adaption is mild and therefore cannot fool our verification. Moreover, domain generalization has side effects on the benign accuracy, especially on complicated datasets ($e.g.$, Tiny-ImageNet).

\begin{table}[!t]
\centering
\caption{The performance of our domain watermark under training with L2D.}
\vspace{-0.5em}
\scalebox{0.95}{
\begin{tabular}{cccccc}
\toprule
Dataset$\downarrow$, Metric$\rightarrow$ & BA (\%)    & VSR (\%)  & Accuracy on Other DW samples (\%) & $H$     & $\hat{H}$     \\ \hline
CIFAR-10        & 91.10 & 86.10 & 19.10                        & -0.14 & -0.73 \\ \hline
Tiny-ImageNet   & 48.63 & 42.00 & 16.00                        & 0.58  & -0.26 \\ \bottomrule
\end{tabular}
}
\label{tab:L2D_main}
\end{table}

\begin{table}[!t]
\centering
\caption{The verification performance via our domain watermark under training with L2D.}
\vspace{-0.5em}
\scalebox{0.9}{
\begin{tabular}{c|ccc|ccc}
\toprule
& \multicolumn{3}{c|}{CIFAR-10}        & \multicolumn{3}{c}{Tiny-ImageNet}         \\ \hline
 & Independent-D & Independent-M & Malicious & Independent-D & Independent-M & Malicious \\ \hline
$\Delta P$ &       0.63       &       0.60       &       0.05
     &     0.53         &     0.53         &     0.07     \\
p-value      &   0.90           &   0.71       &     $10^{-33}$      &       0.96       &     0.95          &     $10^{-12}$      \\ \bottomrule
\end{tabular}
}
\label{tab:L2D_verification}
\end{table}

\section{Reproducibility Statement}
In the appendix, we provide detailed descriptions of the datasets, models, training and evaluation settings, and computational facilities. We provide the codes and model checkpoints for reproducing the main experiments of our evaluation in the supplementary material. In particular, we also release our training codes at \url{https://github.com/JunfengGo/Domain-Watermark}.

\comment{
\section{Connections and Differences with Related Works}

\subsection{Connections and Differences with Backdoor Attacks}

\subsection{Connections and Differences with Backdoor-based Dataset Ownership Verification}

\subsection{Connections and Differences with Data Poisoning}

\subsection{Connections and Differences with Model Ownership Verification}
}

\section{Societal Impacts}
In this paper, we focus on the copyright protection of (open-sourced) datasets. Specifically, we reveal the harmful nature of backdoor-based dataset ownership verification (DOV) and propose the first non-backdoor-based DOV method that is truly harmless. This work has no ethical issues in general since our method is purely defensive and does not reveal any new vulnerabilities of DNNs. However, our method requires a sufficiently large watermarking rate and therefore can not be used to protect a few or a single image. In addition, although our method is resistant to existing adaptive methods, adversaries may try to develop more effective attacks against our DOV method, given the exposure of this paper. People should not be too optimistic about dataset protection.

\section{Discussions about Adopted Data}
In this paper, all adopted samples are from the open-sourced datasets ($i.e.$, CIFAR-10, Tiny-ImageNet, and STL-10). The Tiny-ImageNet dataset may contain a few human-related images. We admit that we modified a few samples for watermarking and verification. However, our research treats all samples the same and the verification samples and modified samples have no offensive content. Accordingly, our work fulfills the requirements of these datasets and has no privacy violation.

\end{document}